\def\sgn{\text{Sign}}
\newtheorem{conjecture}{Conjecture}
\newtheorem{lemma}{Lemma}
\newtheorem{theorem}{Theorem}
\newtheorem{corollary}{Corollary}
\def\hati{\widehat{i}}
\def\hatj{\widehat{j}}
\def\hatm{\widehat{m}}
\def\Os{\widehat{\Omega}_s}
\def\Ob{\widehat{\Omega}_b}
\def\L{\mathcal{L}}
\def\real{\mathbb{R}}
\def\Loss{\mathcal{L}}
\def\delpar{\Delta}
\def\delparHat{\widehat{\Delta}}
\def\grad{\nabla}
\def\noiseLevel{\lambda}
\newcommand{\tr}[2]{\left<#1,#2\right>}
\def\L{\mathcal{L}}
\def\real{\mathbb{R}}
\title{A New Greedy Algorithm for Multiple Sparse Regression}
\author{Ali Jalali\\
Yahoo! Labs\\
\texttt{ajalali@yahoo-inc.com}\\
\and
Sujay Sanghavi\\
University of Texas at Asutin\\
\texttt{sanghavi@mail.utexas.edu}}
\begin{document}
\maketitle

\begin{abstract}
This paper proposes a new algorithm for  multiple sparse regression  in high dimensions, where the task is to estimate the support and values of several (typically related) sparse vectors from a few noisy linear measurements. Our algorithm is a ``forward-backward" greedy procedure that -- uniquely -- {\em operates on two distinct classes of objects.} In particular, we organize our target sparse vectors as a matrix; our algorithm involves iterative addition and removal of both {\em (a)} individual elements, and {\em (b)} entire rows (corresponding to shared features), of the matrix. 

Analytically, we establish that our algorithm manages to recover the supports (exactly) and values (approximately) of the sparse vectors, under assumptions similar to existing approaches based on convex optimization. However, our algorithm has a much smaller computational complexity. Perhaps most interestingly, it is seen empirically to require visibly fewer samples. Ours represents the first attempt to extend greedy algorithms to the class of models that can only/best be represented by a {\em combination} of component structural assumptions (sparse and group-sparse, in our case).
\end{abstract}

\section{Introduction} 
This paper provides a new algorithm for the (standard) {\bf multiple sparse linear regression} problem, which we now describe. We are interested in inferring $r$ sparse vectors $\beta^{*(1)},\ldots,\beta^{*(r)}\in\real^p$ from noisy linear measurements; in particular, for each  $1\leq j \leq r$, we observe $n_j$ noisy linear measurements according to the statistical model
\begin{equation}
y^{(j)} = X^{(j)}\beta^{*(j)} + z^{(j)}\qquad\qquad \forall j\in\{1,\ldots,r\},
\end{equation}
where for each $j$, $X^{(j)}\in\real^{n_j\times p}$ is the design matrix, $y^{(j)}\in\real^{n_j}$ is the response vector and $z^{(j)}\in\real^{n_j}$ is the noise. We combine all tasks $\beta^{*(j)}$ as columns of a matrix $\beta^*\in\real^{p\times r}$. We are thus interested in inferring the matrix $\beta^*$ given $(y^{(j)},X^{(j)})$, for $1\leq j \leq r$.  Here inference means both recovery of the support of $\beta^*$, as well as closeness in numerical values on the non-zero elements.

We are interested in solving this problem in the {\bf high-dimensional setting}, where the number of observations $n_j$ is potentially substantially smaller than the number of features $p$. High-dimensional settings arise in applications where measurements are expensive, and hence a sufficient number may be unavailable. Consistent recovery of $\beta^*$ is now not possible in general; however, as is now well-recognized, it is possible if each $\beta^{*(j)}$ is sparse, and the design matrices satisfy certain properties.

Multiple sparse linear regression comes up in applications ranging from graphical model selection \cite{RWLIsing} and kernel learning \cite{BachMKL} to function estimation \cite{RLLWSPAM} and multi-task learning \cite{JRSR10}, etc. In several of these examples, the different $\beta^{*(j)}$ vectors are {\em related}, in the sense that they share portions of their supports/features, and may even be close in values on those entries. As an example, consider the task of learning handwritten character ``A" for different writers. Since all these handwritings read ``A", they should share a lot of features, but of course there might be few non-shared features indicating each individual handwriting. A natural question in this setting is: can inferring the vectors {\bf jointly} (often referred to as multi-task learning \cite{CaruanaMTL}) result in lower sample complexity than inferring each one individually? 

When the sharing of supports is partial, it turns out the answer depends on the method used. Some ``group LASSO" methods like $\ell_1/\ell_q$ regularization can actually result in {\em lower or higher} sample complexity, as compared to doing for example separate LASSO, depending on whether the level of sharing among tasks is high or low, respectively. The ``dirty mode" approach \cite{JRSR10} develops a method, based on splitting $\beta^*$ into two matrices which are regularized differently, which shows gains in sample complexity for all levels of sharing. We review the related existing work in section \ref{sec:related}.

{\bf Our Contribution:} We provide a novel forward-backward greedy algorithm, designed for when the target structure is a combination of a sparse and block-sparse matrix. We provide theoretical guarantee on the performance of the algorithm in terms of both estimation error and support recovery. Our analysis is more subtle than \cite{JJR11}, since we would like to have \emph{local} assumptions on each task $\beta^{*(j)}$ as opposed to having \emph{global} assumptions on the whole matrix $\beta^*$. Ours is the first attempt to extend greedy approaches, which are sometimes seen to be both statistically and computationally more efficient than convex programming, to high-dimensional problems where the best/only approach involves the use of more than one structural models (sparse and group-sparse in our case). 

\subsection{Related Work} \label{sec:related}

There is now a huge literature on sparse recovery from linear measurements; we restrict ourselves here to the most directly related work on multiple sparse linear regression.

{\bf Convex optimization approaches:} A popular recent approach to leverage sharing in sparsity patterns has been via the use of  $\ell_1/\ell_q$ group norms as regularizers, with $q>1$; examples include the $\ell_1/\ell_\infty$ norm \citep{TurlachVW, ZH08, NWJoint}, and the $\ell_1/\ell_2$ norm~\citep{Lounici09,Obozinski10}. The sample complexity of these methods  may be sensitive \citep{NWJoint} to the level of sharing of supports, motivating the ``dirty model" approach \citep{JRSR10}; in that paper, the unknown matrix was split as the sum of two matrices, regularized to encourage group-sparsity in one and sparsity in the other. Conceptually, this is similar to our line of thinking; however, their approach was based on convex optimization. We show that our method empirically has lower sample complexity than \citep{JRSR10} (although we do not have a theoretical characterization of the constant multiplicative factor that seems to be the difference).

{\bf Greedy methods:} Several algorithms attempt to find the support (and hence values) of sparse vectors by iteratively adding, and possibly dropping, elements from the support. The earliest examples were simple ``forward" algorithms like Orthogonal Matching Pursuit (OMP) \cite{Tropp07,Zhang10}, etc.; these add elements to the support until the loss goes below a threshold. More recently, it has been shown \cite{Zhang08,JJR11} that adding a backward step is more statistically efficient, requiring weaker conditions for support recovery. Another line of (forward) greedy algorithms works by looking at the gradient of the loss function, instead of the function itself; see e.g. \cite{jain2011orthogonal}. A big difference between our work and these is that our forward-backward algorithm works with {\em two different classes of objects simultaneously:} singleton elements of the matrix of vectors that need to be recovered, and entire rows of this matrix. This adds a significant extra dimension in algorithm design, as we need a way to compare the gains provided by each class of object in a way that ensures convergence and correctness.

\section{Our Algorithm}
We now first briefly describe the algorithm in words, and then specify it precisely. A natural loss function for our multi-task problem is 
\[
\mathcal{L}(\beta)=\sum_{j=1}^r\,\frac{1}{2n_j}\|y^{(j)}-X^{(j)}\beta^{(j)}\|_2^2
\]
Let $\beta = [\beta^{(1)} \ldots \beta^{(r)}]$ be the $p\times r$ matrix which has the $j^{th}$ target vector $\beta^{(j)}$ as its $j^{th}$ column. Our algorithm is based on iteratively building and modifying the estimated support of $\beta$, by adding (in the forward step) and removing (in the backward step) {\bf two kinds} of objects:
singleton elements $(i,j)$, and entire rows $m$. The {\bf basic idea} is to include in forward steps singletons/rows that give big decreases in the loss, and to remove in the backward steps those whose removal results in only a small increase in the loss. However, the kinds of objects cannot be compared in an ``apples to apples" way, which means that doing the forward and backward steps in a way that ensures convergence and correctness is not immediate; as we will see below there are some intricacies in how the addition and removal decisions are made. 

It is easiest to understand our algorithm in terms of ``reward" for forward steps, and``cost" for backward steps. Each inclusion results in a decrease in the loss; the corresponding reward is an appropriate {\em weighting} of this decrease, with the weighting tilted to favor singleton elements over entire rows. Similarly, each removal results in an increase in the loss; the corresponding cost is the {\em same} weighting of this increase. Each iteration consists of one forward step, and (potentially) several backward steps. We maintain two sets: $\Os \subset [p]\times [r]$ of singleton elements, and $\Ob\subset [r]$ of rows\footnote{We abuse notation by using $\Ob$ to also refer to all elements in these rows. The correct usage is always clear from context.}.
\begin{enumerate}
\item In the {\em forward step}, we find the new object whose inclusion would yield the {\em highest} reward. If this reward is large enough, the object is included in its corresponding matrix (i.e. $S$ or $B$). We also record the value of the reward, and the type of object. If this reward is less than an absolute threshold, the algorithm terminates.
\item In each {\em backward step}, we find the object with the {\em lowest} cost. If this cost is ``low enough", we remove the object. Else we do not. We now explain what ``low enough" means; this is crucial. Say the object with lowest cost is a singleton element, and there are currently $k$ singleton elements in the matrix $S$. Then we remove this element if its cost is smaller, by a fixed fraction $\nu<1$, than the reward obtained when the $k^{th}$ singleton element was added to $S$ (note that, because each iteration has several backward steps, this addition could have happened many forward steps prior to the current iteration). Similarly, if the object was a row, its cost is compared to the corresponding row reward obtained. 
\end{enumerate}

%
%

{\bf Convergence:} It can be seen that in any given iteration, the loss can actually increase! This is because there can be multiple backward steps in the same iteration. To see that the algorithm converges, note that the cost of each backward step is at most the fraction $\nu<1$ of the reward of the  {\em corresponding} forward step: the one  it was compared to when we made the decision to execute the backward step. Thus, this backward and forward step, as a pair, result in a decrease in the loss. Convergence follows from the fact that there is a one to one correspondence between each backward step and its corresponding forward step; there are no backward steps that are ``un-accounted for".


\begin{algorithm*}
\caption{\small Greedy Dirty Model}
\label{Alg:General}
\begin{algorithmic}
\STATE {\bf Input:} Data $\{y^{(1)}, X^{(1)}, \ldots, y^{(r)}, X^{(r)}\}$, Stopping Threshold $\epsilon$, Sharing threshold weight $w\in (1,r)$, Backward Factor $\nu \in (0,1)$
\STATE {\bf Output Variables}: set of singleton elements $\Os\subset [p]\times [r]$, set of rows $\Ob\subset [p]$

\vspace{0.1in}

\STATE{\bf Initialize}: $\Os = \emptyset$, $\Ob = \emptyset$, $\widehat{\beta} = 0$, $k=0$

\vspace{0.1in}

\WHILE[\textit{Forward Step}]{true}

	\vspace{0.1in}

	\STATE Find the best new singleton element and its reward
	\[
	[ \mu_s , ( \hati,\hatj ) ] ~ \leftarrow ~ \max_{(i,j)\notin \Os} \left \{ \L(\widehat{\beta}) \, -\, \min_{\gamma\in \real} \L(\widehat{\beta}+\gamma e_i e_j^T) \, \right \} 
	\]
	\STATE Find the best new row and its reward
	\[
	[ \mu_b , \hatm ] ~ \leftarrow ~ \frac{1}{w} ~ \times ~ \max_{m\notin \Ob} \left \{ \L(\widehat{\beta}) \, -\, \min_{\alpha\in \real^r} \L(\widehat{\beta}+e_m \alpha^T) \, \right \} 
	\]
	\STATE Choose and record the bigger weighted gain $\mu^{(k)} \leftarrow \max(\mu_s,\mu_b)$ 

	\vspace{0.1in}

	\IF[\textit{Gain too small}]{$\mu^{(k)} \leq \epsilon$} 
		\STATE {\bf break} {\em (algorithm stops)}
	\ENDIF 

	\vspace{0.1in}

	\STATE {\bf If} $\mu_b \geq\mu_s$ {\bf then} add row $\Ob \leftarrow \Ob \cup \hatm$, {\bf else} add singleton $\Os \leftarrow \Os \cup (\hati,\hatj)$

	\vspace{0.1in}

 	\STATE Re-estimate on the new support set 
	\[ 
	\widehat{\beta} \leftarrow \arg \min_{\beta:supp(\beta)\subset \Os\cup\Ob} \L(\beta)
 	\]
	\STATE Increment $k\leftarrow k+1$

	\vspace{0.1in}

	\WHILE[\textit{Several backward steps for each forward step}]{true}

		\vspace{0.1in}

		\STATE Find the worst singleton element and its cost
		\[
		[ \nu_s , ( \hati,\hatj ) ] ~ \leftarrow ~ \min_{(i,j) \in \Os} \left \{ \L(\widehat{\beta}-\widehat{\beta}_i^{(j)}e_ie_j^T) \, -\,  \L(\widehat{\beta}) \, \right \} 
		\]
		\STATE Find the worst row and its cost
		\[
		[ \nu_b , \hatm ] ~ \leftarrow ~ \frac{1}{w} ~ \times \min_{m \in \Ob} \left \{  \L(\widehat{\beta}-e_m\widehat{\beta}_m)  \, -\,  \L(\widehat{\beta}) \, \right \} 
		\]

		\vspace{0.1in}

		\IF[\textit{Cost too large}]{$\min(\nu_s,\nu_b) > \nu \epsilon \mu^{(k-1)}$} 
			\STATE {\bf break} {\em (backward steps end)}
		\ENDIF

		\vspace{0.1in}

		\STATE {\bf If} $\nu_b \leq\nu_s$ {\bf then} remove row $\Ob \leftarrow \Ob - \hatm$, {\bf else} remove singleton $\Os \leftarrow \Os - (\hati,\hatj)$		

		\vspace{0.1in}

		\STATE Re-estimate on the new support set 
			\[ 
			\widehat{\beta} \leftarrow \arg \min_{\beta:supp(\beta)\subset \Os\cup\Ob} \L(\beta)
	 	\]

		\vspace{0.1in}
		
		\STATE Decrement $k \leftarrow k-1$

		\vspace{0.1in}

	\ENDWHILE

	\vspace{0.1in}
	
\ENDWHILE
\end{algorithmic}
\end{algorithm*}

\section{Performance Guarantees}
{\em Shared and non-shared features:} Consider the true matrix $\beta^*$, and for a fixed value of integer $d$ define the set of ``shared" features/rows 
$\Omega_b^* ~ := ~ \{i\in [p] \, | \, |supp(\beta^*_i)| > d \}$ that have support more than $d$. In this paper, we overload notation so that $\Omega_b^*$ refers to both the set of rows above (in which case $\Omega_b^*\subset [p]$) and the set of all elements in these rows (in which case $\Omega_b^*\subset [p]\times [r]$); correct interpretation is always clear from context. We can also define support on the non-shared features $\Omega^*_s \subset [p]\times [r]$ as follows $\Omega^*_s ~ := ~ \{(i,j)\in [p]\times [r] \, | \, \beta^*_{ij} \neq 0 ~\text{and $i\notin \Omega^*_b$}\}$ and finally, we define $s^*_j=|\Omega^*_b|+|\{i:(i,j)\in\Omega^*_s\}|$. 

Recall that our method requires a number $w\in (1,r)$ as an input, and outputs two sets --  a set $\Os\subset [p]\times [r]$ of singleton elements, and a set $\Ob\subset [p]$ of rows  -- and an estimated matrix $\widehat{\beta}$ which is supported on $\Os\cup\Ob$. 
Our main analytical result, Theorem \ref{thr:main} below, is a deterministic quantification of conditions (on $X,z,\beta^*$) under which our algorithm with $w\in(d-1,d)$ as input, yields sparsistency -- i.e. recovery of the shared rows $\Ob=\Omega_b^*$, the support on the non-shared rows $\Os = \Omega^*_s$ -- and small error $\|\widehat{\beta} - \beta^*\|_F$. We start with the assumptions and then state the theorem. Corollary \ref{cor:random} covers two popular scenarios with randomness: where the design matrices $X$ are deterministic but the noise vectors $z$ are Gaussian, and the case where both $X$ and $z$ are Gaussian.

{\bf Restricted Eigenvalue Property (REP):} Fix a $j$, and sparsity level $s_j$. We say the matrix $Q^{(j)} :=X^{(j)}X^{(j)T}$ satisfies $REP(s_j)$ with constants $C_{\min}$ and $\rho\geq 1$, if for all $j$, and all $s_j$-sparse vectors $\delta\in\real^p$, we have that
\begin{equation}
\label{Eq:REP}
\begin{aligned}
C_{\min}\|\delta\|_2\leq\|Q^{(j)}\delta\|_2\leq\rho C_{\min}\|\delta\|_2\qquad\forall \|\delta\|_0\leq s_j\\
\end{aligned}
\end{equation}
In our results, we assume (by taking the maximum/minimum over all tasks) that $C_{\min}$ and $\rho$ are the same for all tasks. Note that the level of sparsity $s_j$ will still be different for different $j$.

{\bf Gradient of the loss function:} If there is no noise, i.e. $z^{(j)} = 0$ for all $j$, then  $\beta^*$ is the optimal point of the loss function and the loss function has zero gradient there, i.e. $\nabla\Loss(\beta^*) = 0$. However, for any $j$ if $z^{(j)} \neq 0$, the corresponding gradient $\nabla^{(j)} := -X^{(j)T}\left(y^{(j)}-X^{(j)}\beta^{*(j)}\right)$ will not be zero either. We define $\lambda$ to be an upper bound on the infinity norm of this gradient, i.e. $\lambda := \max_j \|\nabla^{(j)}\|_\infty$.

{\bf Minimum non-zero element:} Elements of $\beta$ with very small magnitude are hard to distinguish from 0, so we need to specify a lower bound on the magnitude of elements in the support of $\beta$ we want to recover. Towards this end, for a given $d$, suppose $\bar{\beta}^{*}_{m,d}$ is the magnitude of the  $d^{th}$ largest entry (by magnitude) entry in row $m$ of $\beta^{*}$. and $\bar{\beta}^{*}_{d}=\min_{m\in\Omega^*_b}\,\bar{\beta}^{*}_{m,d}$.  Finally, let $\beta^*_{\min}=\min\left \{ \min_{(i,j)\in\Omega_s}\beta^{(j)*}_i\,,\,\min_{m\in\Omega^*_b}\,\bar{\beta}^{*}_{m,d} \right \}$. Note that a lower bound on $\beta_{\min}$ implies that there at least $d$ elements whose magnitude is above that bound in every row of $\Omega^*_b$, and also that every element in $\Omega^*_s$ is above that bound.

\def\RSMult{\eta}
\begin{theorem}[Sparsistency] \label{thr:main}
Suppose the algorithm is run with $\epsilon,w$ and $\nu$. Let $d$ be such that $d-1< w< d$, and for this $d$ let shared rows $\Omega^*_b$, non-shared features $\Omega^*_s$, sparsity levels $\{s^*_j\}$, and minimum element $\beta^*_{\min}$ as above. Suppose also that $\beta^*_{\min} > 4\rho\sqrt{\epsilon}/C_{\min}$, and for each $j$ we have that $REP\left(\RSMult \, s^*_j\right)$ holds with constants $C_{\min}$ and $\rho$, and that {\small $\RSMult \geq\,2 + 4r\rho^4(\rho^4-\rho^2+2)/(w\,\nu)$}. Then, if we run Algorithm~\ref{Alg:General} with stopping threshold $\epsilon \ge \frac{4 \rho^2 \RSMult r^2 s^* \lambda^2}{w \nu C_{\min}^2}$ , the output $\widehat{\beta}$ with shared support $\Ob$ and individual support $\Os$ satisfies: 
\begin{itemize}
	\item[(a)] {\bf Error Bound:} $\|\widehat{\beta} - \beta^*\|_F \le \frac{\sqrt{rs^*}}{C_{\min}}\left(\frac{\lambda\sqrt{\RSMult}}{C_{\min}} + 2\rho\sqrt{\epsilon}\right).$
	\item[(b)] {\bf Support Recovery:} $\Ob=\Omega^*_b$ and $\Os=\Omega^*_s$.\\
\end{itemize}	
\end{theorem}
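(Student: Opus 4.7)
My plan is to adapt the standard forward-backward greedy analysis (cf.\ Zhang '08 and \cite{JJR11}) to an algorithm that manipulates both singletons and rows, using the weight $w$ as the exchange rate between the two object classes. The argument splits into three stages: (i) a uniform bound on the running support size so that REP at sparsity $\eta s_j^*$ applies throughout, (ii) support correctness via four contradiction arguments, and (iii) the Frobenius error bound as a simple consequence.

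For the support-size bound I would use the pairing invariant from the paper's convergence discussion: every backward step matches with a strictly earlier, unmatched forward step whose reward is at least $1/\nu$ times the backward cost, so each matched pair nets a loss decrease of at least $(1-\nu)\epsilon$, and each unmatched forward step nets at least $\epsilon$. Since $\mathcal{L}(0)$ upper bounds the total achievable decrease, the total support $|\widehat{\Omega}_s| + |\widehat{\Omega}_b|$ stays $O(\mathcal{L}(0)/((1-\nu)\epsilon))$. Plugging in the lower bound on $\epsilon$ from the theorem certifies that every column of $\widehat{\beta}$ remains $\eta s_j^*$-sparse throughout, so REP is in force both at intermediate iterations and at termination.

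The heart of the proof, and the main obstacle, is support correctness. I argue by four contradictions that span the two object classes. For a spurious singleton $(i,j)\in\widehat{\Omega}_s$, restricted optimality of $\widehat{\beta}$ yields $|\widehat{\beta}_{ij}|=O(\lambda/C_{\min})$ and a backward cost of order $\lambda^2/C_{\min}^2$, which the lower bound on $\epsilon$ drives below $\nu\mu^{(k-1)}\le\nu\epsilon$, contradicting the backward exit rule. For a spurious row $m\in\widehat{\Omega}_b$ the same argument picks up an extra factor of $r$ from the row dimension, and the condition $\eta\ge 2+4r\rho^4(\rho^4-\rho^2+2)/(w\nu)$ is exactly what is needed to keep the cost, after the $1/w$ weighting, below $\nu\mu^{(k-1)}$. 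For a missing non-shared element, the forward reward is at least $C_{\min}^2\beta_{ij}^{*2}/(2\rho^2)\ge\epsilon$ by the hypothesis $\beta_{\min}^*>4\rho\sqrt{\epsilon}/C_{\min}$, so the algorithm could not have stopped. For a missing shared row $m\in\Omega_b^*\setminus\widehat{\Omega}_b$, the row has at least $d$ entries of magnitude $\ge\beta_{\min}^*$, giving row reward $\gtrsim d\,C_{\min}^2\beta_{\min}^{*2}/(2\rho^2)$ before the $1/w$ weighting, and the choice $w<d$ preserves the strict inequality with $\epsilon$. The subtle part, and the reason for the specific range $w\in(d-1,d)$, is that at every moment of choice the winning object in the $\max$ over classes must be consistent with the class of the true object it is supposed to recover; this requires re-examining the REP bounds across classes, in particular ruling out that a row with fewer than $d$ true nonzeros wins against its own singleton components (handled by $w>d-1$) and that a row with $\ge d$ true nonzeros loses to a singleton (handled by $w<d$).

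Once $\widehat{\Omega}_b=\Omega_b^*$ and $\widehat{\Omega}_s=\Omega_s^*$ are established, part (a) is straightforward: restricted optimality on the true support together with the stopping slack gives a $\sqrt{\epsilon}$-level bound on the gradient of $\mathcal{L}$ at $\widehat{\beta}$ restricted to the true support; subtracting $\nabla\mathcal{L}(\beta^*)$, using $\|\nabla\mathcal{L}(\beta^*)\|_\infty\le\lambda$, and applying REP at sparsity $\eta s_j^*$ to the $\eta s_j^*$-sparse deviation $\widehat{\beta}-\beta^*$ delivers the stated Frobenius bound, with the $\lambda\sqrt{\eta}/C_{\min}$ and $2\rho\sqrt{\epsilon}$ summands arising from the true-loss gradient and the greedy stopping slack respectively.
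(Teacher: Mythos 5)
Your stage (i) has a genuine gap, and it is the crux of the whole analysis. Bounding the number of forward steps by $O(\mathcal{L}(0)/((1-\nu)\epsilon))$ only proves \emph{termination}; it does not certify that the per-column support size stays below $\RSMult s^*_j$, which is what you need for $REP(\RSMult s^*_j)$ to be applicable to $\widehat{\beta}^{(j)}-\beta^{*(j)}$ and to intermediate iterates. Indeed $\mathcal{L}(0)=\sum_j \tfrac{1}{2n_j}\|y^{(j)}\|_2^2$ is of constant order while $\epsilon$ scales like $\lambda^2 \sim \log(rp)/n$, so your bound is enormous compared to $\RSMult s^*_j$ and cannot be ``plugged in'' to recover the needed sparsity level. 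The paper's Lemma~\ref{LemStoppingSize} does something quite different: it considers the \emph{first} time the column support reaches $(\RSMult-1)s^*_j+1$, combines the general backward-step failure bound (Lemma~\ref{lem:backwardstep}) with the general error bound (Lemma~\ref{lem:errorbound}) at that instant, and derives $f(\RSMult)\le \lambda\rho/(C_{\min}\sqrt{\epsilon})$, which contradicts the choice of $\epsilon$ precisely when $\RSMult\ge 2+4r\rho^4(\rho^4-\rho^2+2)/(w\nu)$. This also means you have misattributed that condition on $\RSMult$: it is what makes $f(\RSMult)>0$ in the stopping-size argument, not what controls the cost of a spurious row in the backward step. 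Without a correct replacement for Lemma~\ref{LemStoppingSize}, REP cannot be invoked and the rest of your argument does not get off the ground.

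Your stage (ii) also departs from the paper in a way that needs more care. The paper does not run four per-object contradictions; it runs two \emph{aggregate} counting arguments: for missing elements, $(\beta^*_{\min})^2\,|(\Omega^*_b\cup\Omega^{*(j)}_s)-(\Ob\cup\Os^{(j)})| \le \|\widehat{\beta}-\beta^*\|_2^2$, which combined with the error bound of part (a) forces the count to be zero; for spurious elements, Lemma~\ref{LemBackwardStep} lower-bounds $\|\widehat{\beta}_{\text{spurious}}\|_2^2$ by $\tfrac{w\nu\epsilon}{r\rho^2 C_{\min}^2}$ times the count, which again combined with the error bound forces the count below $1/2$. Your per-element version of the missing-element step, namely that the forward reward of a single missing coordinate is at least $C_{\min}^2\beta_{ij}^{*2}/(2\rho^2)$, is not justified as stated: the reward is governed by $\nabla_{ij}\Loss(\widehat{\beta})$, whose relation to $\beta^{*}_{ij}$ passes through both the noise term $\lambda$ and the current estimation error, which is exactly why the paper routes this through Lemmas~\ref{LemForwardStep} and~\ref{LemErrorBound} rather than arguing coordinate by coordinate. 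Note also that the paper proves the error bound (a) \emph{before} and independently of support recovery (b) -- Lemma~\ref{lem:LemErrorBound}(i) holds for whatever support the algorithm outputs -- whereas your plan derives (a) only after (b); that ordering can be made to work, but it forfeits the clean way (b) is actually deduced from (a) in the paper.
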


{\bf Remark 1.} The noiseless case $z=0$ corresponds to $\lambda=0$, in which case the algorithm can be run with $\epsilon =0$. As can be seen, this yields exact recovery, i.e. $\widehat{\beta} = \beta^*$.

{\bf Remark 2.} The smaller the value of the backward factor $\nu$, the faster the algorithm is likely to converge as there are likely to be fewer backward steps. However, smaller $\nu$ results in larger values of $\RSMult$ and $\beta_{\min}$ that we need for success; thus an algorithm with smaller $\nu$ is likely to work on a smaller range of problems: a trade-off between statistical and computational complexity.

{\bf Remark 3.} Note that all the rows in $\Os$ has less than $d$ elements. To see this, suppose in contrary that there exist a row $m$ in $\Os$ that has more than or equal to $d$ non-zeros. Since in the algorithm these single elements should compete with $\frac{1}{w}$ times the improvement of the row, and $d-1<w<d$, the row $m$ will be chosen for $\Ob$ before those $d$ entries are chosen for $\Os$. Once the row $m$ goes to $\Ob$, since we optimize for each entry on the row separately, it is impossible that any other single element on that row goes to $\Os$. Hence, rows of $\Os$ have less than $d$ entries and can be distinguished from the rows of $\Ob$.

{\bf Remark 4.} Some recent results \cite{JJR11,TWD11} study greedy algorithms in a general ``atomic" framework. While our setting could be made to fall into this general framework, the resulting algorithm would be different, and the performance guarantees would be weaker. These results require $REP(\eta\sum s^*_j)$ for ``each" and ``all" task, which is order-wise (by an order of $r$) worse than our assumption $REP(\eta s^*_j)$ for task $j$. To get this result, we leverage the fact that our loss function is separable with respect to tasks and hence, we do the analysis on a per-task basis.

\begin{corollary} \label{cor:random}
For sample complexity $n_j\geq c_1\,s_j\log(rp)$, with probability at least $1-c_2\exp(-c_3n)$ for some constants $c_1-c_4$, we have

\begin{itemize}
\item [(C1)] Under the assumptions of the Theorem~\ref{thr:main}, if $z$ is $\mathcal{N}(0,\sigma)$, then the result holds for $\lambda=c_4\sqrt{\frac{\log(rp)}{n}}$ for some constant $c_4$.\\

\item [(C2)] If $X^{(j)}$ is $\mathcal{N}(0,\Sigma^{(j)})$ and REP assumption in Theorem~\ref{thr:main} holds for $Q^{(j)}=\Sigma^{(j)}\Sigma^{(j)T}$, then the result holds.
\end{itemize}
\end{corollary}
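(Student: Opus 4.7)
The plan is to verify, in each of the two scenarios, that the hypotheses of Theorem~\ref{thr:main} are satisfied with the claimed probability; the error bound and support recovery then follow by directly invoking the theorem.

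For part (C1), the REP holds deterministically and the only probabilistic quantity to control is $\lambda = \max_j \|\nabla^{(j)}\|_\infty$. Differentiating the per-task loss at $\beta^{*(j)}$ gives $\nabla^{(j)} = -\frac{1}{n_j} X^{(j)T} z^{(j)}$, so each coordinate is a linear combination of independent Gaussians and hence itself Gaussian with variance $\sigma^2 \|X^{(j)}_i\|_2^2 / n_j^2$. Under the standard column normalization $\|X^{(j)}_i\|_2 \le \sqrt{n_j}$, each coordinate is $\mathcal{N}(0, \sigma^2/n_j)$. Applying the Gaussian tail bound to each entry and union-bounding over the $rp$ coordinates yields $\lambda \le c_4 \sqrt{\log(rp)/n}$ with probability at least $1 - c_2 \exp(-c_3 n)$, for suitable absolute constants. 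Substituting this $\lambda$ into Theorem~\ref{thr:main} gives exactly the stated conclusion.

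For part (C2), in addition to the gradient bound I must upgrade the \emph{population} REP (assumed on $\Sigma^{(j)}\Sigma^{(j)T}$) to a \emph{sample} REP on the random $Q^{(j)}$. The standard recipe is: (i) for each fixed $\RSMult s_j^*$-sparse unit vector $\delta$, a Hanson--Wright / Bernstein tail bound controls $|\delta^T Q^{(j)}\delta - \delta^T \Sigma^{(j)}\Sigma^{(j)T}\delta|$ with failure probability exponentially small in $n_j$; (ii) a covering argument over the unit sphere of each $\RSMult s_j^*$-sparse coordinate subspace (an $\varepsilon$-net of size $(c/\varepsilon)^{\RSMult s_j^*}$), followed by a union bound across the $\binom{p}{\RSMult s_j^*}$ subspaces and across $r$ tasks, promotes this to a uniform bound as soon as $n_j \gtrsim s_j \log(rp)$; (iii) a perturbation argument then shows that a small multiplicative distortion between $Q^{(j)}$ and $\Sigma^{(j)}\Sigma^{(j)T}$ only mildly inflates the REP constants $C_{\min}$ and $\rho$, which can be absorbed into those appearing in Theorem~\ref{thr:main}. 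The gradient bound is handled exactly as in (C1) by conditioning on $X^{(j)}$ and using concentration of Gaussian column norms around $\sqrt{n_j}$, so the entry-wise Gaussian tail argument goes through conditionally and a further union bound over $X$ gives the stated rate for $\lambda$.

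The main obstacle is the uniform sparse-vector concentration step (ii) in part (C2); this is the heart of every sample-REP/RIP proof for Gaussian ensembles and is by now standard (Raskutti--Wainwright--Yu and descendants). Once it is in place, the remaining work is bookkeeping: checking that the resulting sample-level REP constants are compatible with the requirements $\RSMult \ge 2 + 4r\rho^4(\rho^4-\rho^2+2)/(w\nu)$ and $\beta^*_{\min} > 4\rho\sqrt{\epsilon}/C_{\min}$ of Theorem~\ref{thr:main}, and combining the failure probabilities of the REP event and of the gradient event via a single union bound, which leaves the overall guarantee in the form $1-c_2\exp(-c_3 n)$ after folding the polynomial $\log(rp)$ factors into the constants.
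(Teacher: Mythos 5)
Your proposal is correct and follows essentially the same route as the paper: the paper reduces the corollary to exactly the two ingredients you identify (a Gaussian-tail-plus-union-bound control of $\lambda=\max_j\|\nabla^{(j)}\|_\infty$, and a transfer of the population REP on $\Sigma^{(j)}\Sigma^{(j)T}$ to a sample REP on $Q^{(j)}$ under $n_j\gtrsim s_j\log p$), but it simply cites Lemmas 5 and 9 of the Wainwright Lasso paper for these facts rather than re-deriving them via the covering/concentration argument you sketch. No gap; your version just makes explicit the standard proofs of the two cited lemmas.
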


\begin{proof} [Proof of Theorem~\ref{thr:main}]
Let $\widehat{s}_j=|\Os^{(j)}\cup\Ob\cup\Omega^{*(j)}_s\cup\Omega^*_b|$ be the size of the support of the estimated $j^{th}$ task union with the support of the true $j^{th}$ task. Inspired by \cite{JJR11}, our proof is based upon the following two lemmas:

\begin{lemma}
If $REP(\widehat{s}_j)$ holds, then
\begin{itemize}
\item [(i)] {\small$\left\|\widehat{\beta}^{(j)} - \beta^{*(j)}\right\|_2 \leq \frac{1}{C_{\min}} \left(\frac{\noiseLevel\, \sqrt{\widehat{s}_j}}{C_{\min}} +2\rho\sqrt{|(\Omega^*_b\cup\Omega^{*(j)}_s)-(\Ob\cup\Os^{(j)})|\epsilon}\right)$}.
\item [(ii)] {\small$\left\|\widehat{\beta}^{(j)}_{(\Os^{(j)} \cup \Ob) - (\Omega^{*(j)}_s\cup \Omega^*_b)}\right\|_2 \ge \frac{\sqrt{w\nu\epsilon}}{\sqrt{r}\,\rho C_{\min}} \sqrt{|(\Os^{(j)} \cup \Ob) - (\Omega^{*(j)}_s\cup \Omega^*_b)|}$}.
\end{itemize}
\label{lem:LemErrorBound}
\end{lemma}
\begin{lemma}
If $\epsilon$ is chosen properly (see appendix for the exact expression), then $k$ never exceeds $(\eta-1)s^*_j$, and hence, $\widehat{s}_j\leq k+s^*_j\leq\eta s^*_j$.
\label{lem:LemStoppingSize}
\end{lemma}
Part (i) and (ii) of Lemma~\ref{lem:LemErrorBound} are consequences of the fact that when algorithm stops the forward step and previous backward step fail to go through, respectively. To ensure the assumption of Lemma~\ref{lem:LemErrorBound} holds, we need the Lemma~\ref{lem:LemStoppingSize} that bounds $\widehat{s}_j$. The proof can be completed as below.

\begin{list}{\labelitemi}{\leftmargin=1em}
\item[(a)] The result follows directly from part (i) of Lemma~\ref{lem:LemErrorBound} noting that $s\le \RSMult s^*$ by Lemma~\ref{lem:LemStoppingSize}.

\item [(b)] Considering {\bf Remark 3}, we only need to show that $(\Omega_b^* \cup \Omega_s^{*(j)})- (\Ob \cup \Os^{(j)})=(\Ob \cup \Os^{(j)})-(\Omega_b^* \cup \Omega_s^{*(j)})=\emptyset$. For any $\tau\in\real$, we have
\small\begin{align*}
	(\beta^*_{\min})^2 \left|(\Omega_b^* \cup \Omega_s^{*(j)})- (\Ob \cup \Os^{(j)})\right| &= (\beta^{*}_{\min})^2 \left|\{(i,j) \in (\Omega_b^* \cup \Omega_s^{*(j)})- (\Ob \cup \Os^{(j)}) : |\beta_i^{*(j)}| \geq \beta^{*}_{\min}\}\right|\\ 
			&\le \|\beta^*_{(\Omega_b^* \cup \Omega_s^{*(j)})- (\Ob \cup \Os^{(j)})}\|_2^2 \,\le \|\beta^* - \widehat{\beta}\|_2^2\\
			&\le \frac{2 \RSMult r s^*\noiseLevel^2}{C_{\min}^4} + \frac{8 \rho^2 \epsilon}{C_{\min}^2}|(\Omega_b^* \cup \Omega_s^{*(j)})- (\Ob \cup \Os^{(j)})|,
\end{align*}\normalsize
where the last inequality follows from part (a) and the inequality $(a+b)^2 \leq 2a^2+2b^2$. Now, dividing both sides by $\beta^{*2}_{\min}/2$ we get

\vspace{-0.4cm}
\small\begin{align*}
	2 |(\Omega_b^* \cup \Omega_s^{*(j)})- (\Ob \cup \Os^{(j)})| &\le \frac{4\RSMult r s^* \noiseLevel^2}{C_{\min}^4 (\beta^*_{\min})^2} + \frac{16\rho^2\epsilon}{C_{\min}^2(\beta^*_{\min})^2}|(\Omega_b^* \cup \Omega_s^{*(j)})- (\Ob \cup \Os^{(j)})|\\ &\leq \frac{1}{2} + |(\Omega_b^* \cup \Omega_s^{*(j)})- (\Ob \cup \Os^{(j)})|.
\end{align*}\normalsize
The inequality follows from the assumption on $\epsilon$ and $\beta^*_{\min}$ implying $|(\Omega_b^* \cup \Omega_s^{*(j)})- (\Ob \cup \Os^{(j)})| =0$. To show the converse, from part (ii) of Lemma~\ref{lem:LemErrorBound}, we have

\vspace{-0.4cm}
\small\begin{equation}
\begin{aligned}
|(\Ob \cup \Os^{(j)})-(\Omega_b^* \cup \Omega_s^{*(j)})| &\leq \frac{r\rho^2 C_{\min}^2}{w\nu\epsilon} \|\widehat{\beta}^{(j)}_{(\Ob \cup \Os^{(j)})-(\Omega_b^* \cup \Omega_s^{*(j)})}\|_2^2\leq \frac{r\rho^2 C_{\min}^2}{w\nu\epsilon} \left\|\widehat{\beta}^{(j)}-\beta^{*(j)}\right\|_2^2\\
&\leq \frac{r\rho^2 C_{\min}^2}{w\nu\epsilon} \frac{2 \RSMult r s^* \lambda^2}{C_{\min}^4}\leq 1/2
\end{aligned}
\nonumber
\end{equation}\normalsize
due to the setting of  the stopping threshold $\epsilon$. This implies that $|(\Ob \cup \Os^{(j)})-(\Omega_b^* \cup \Omega_s^{*(j)})|=0$ and concludes the proof of the theorem.
\end{list}
\vspace{-0.8cm}
\end{proof}

\section{Experimental Results}\label{SecExper}
\begin{figure}[t]
\centering
\subfigure[\small Little overlap: $\kappa=0.3$]{
\includegraphics[width=0.46\linewidth]{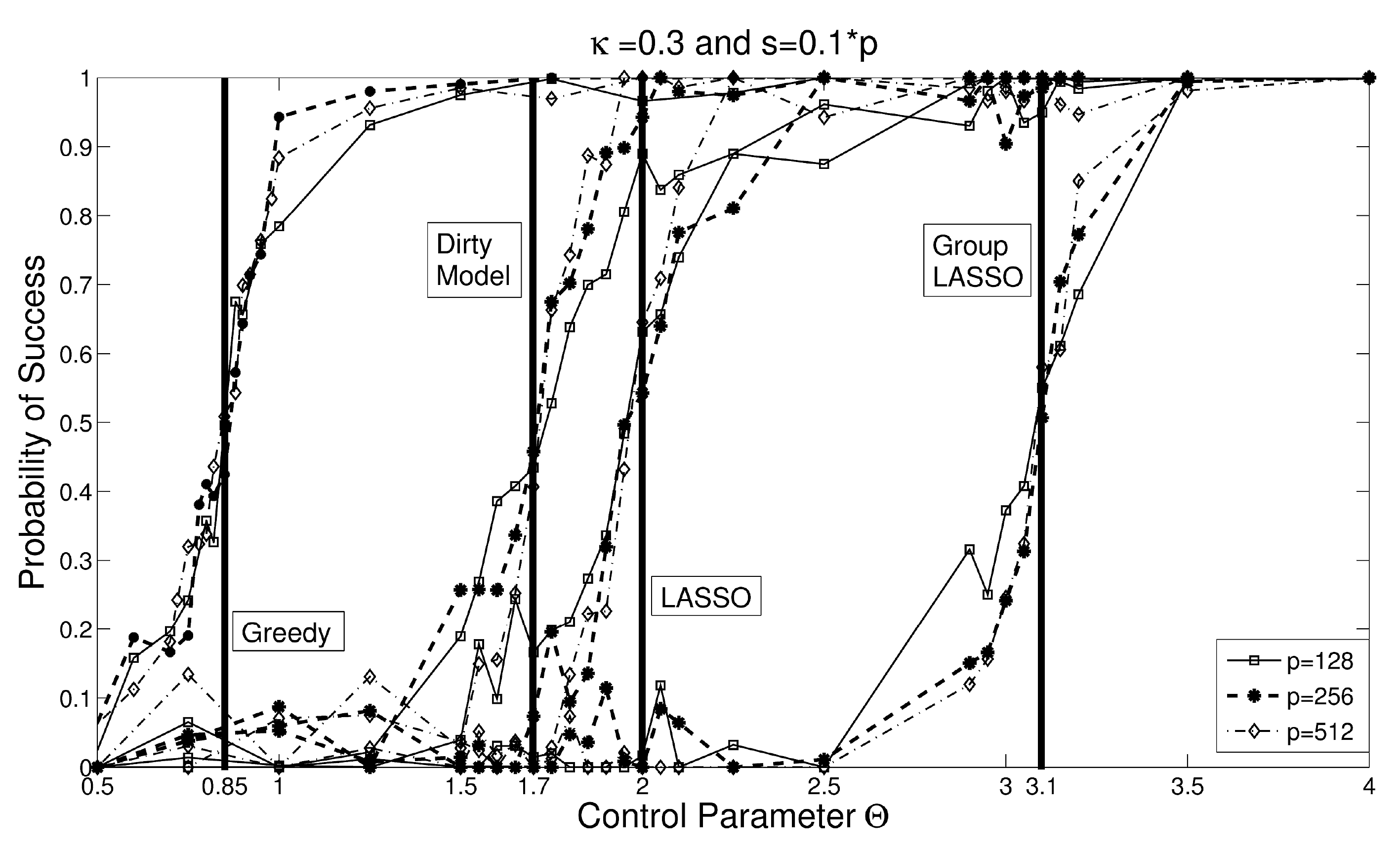}
\label{fig:fig1a}
}
\subfigure[\small Moderate overlap: $\kappa=2/3$]{
\includegraphics[width=0.46\linewidth]{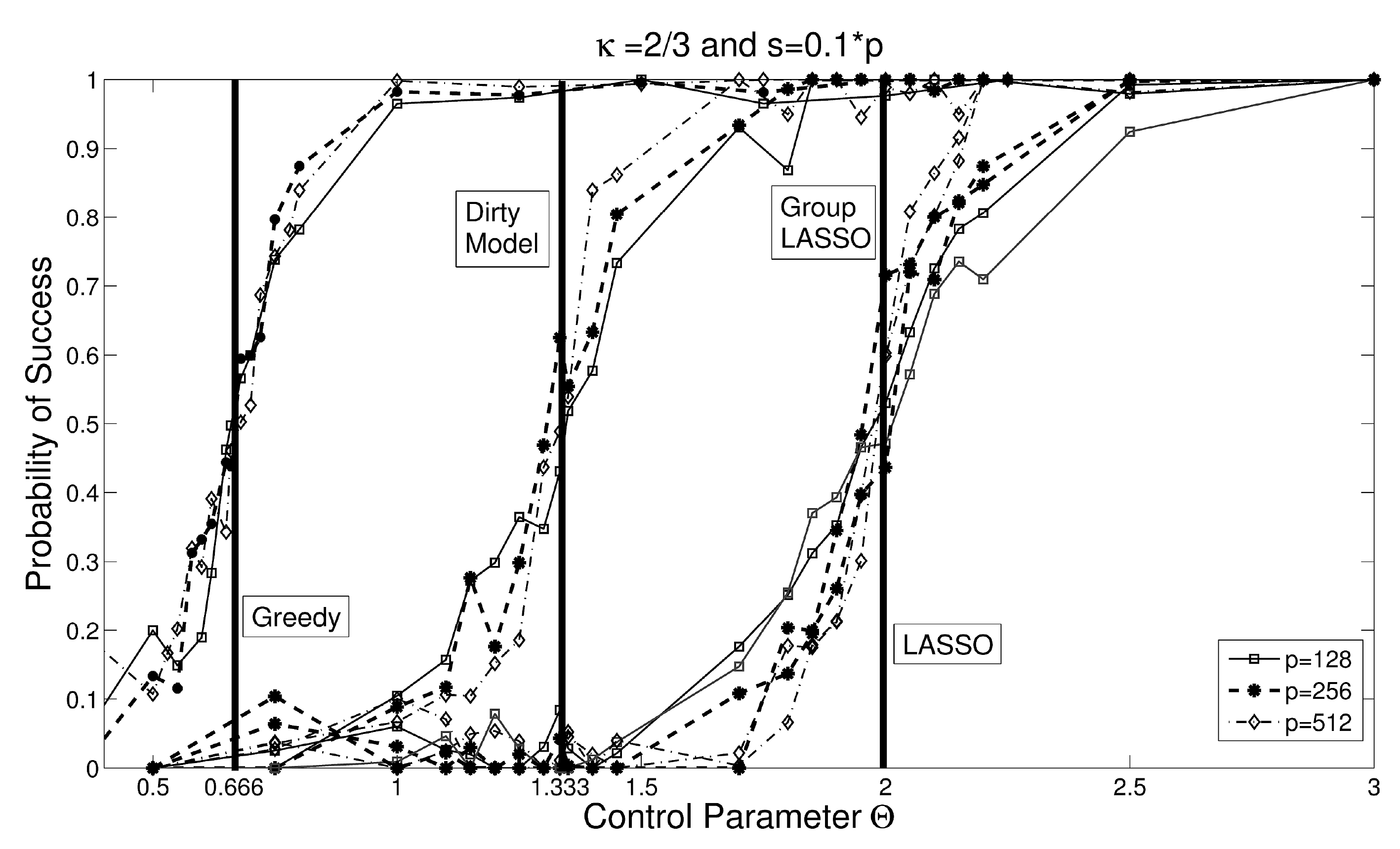}
\label{fig:fig1b}
}
\subfigure[\small High overlap: $\kappa=0.8$]{
\includegraphics[width=0.46\linewidth]{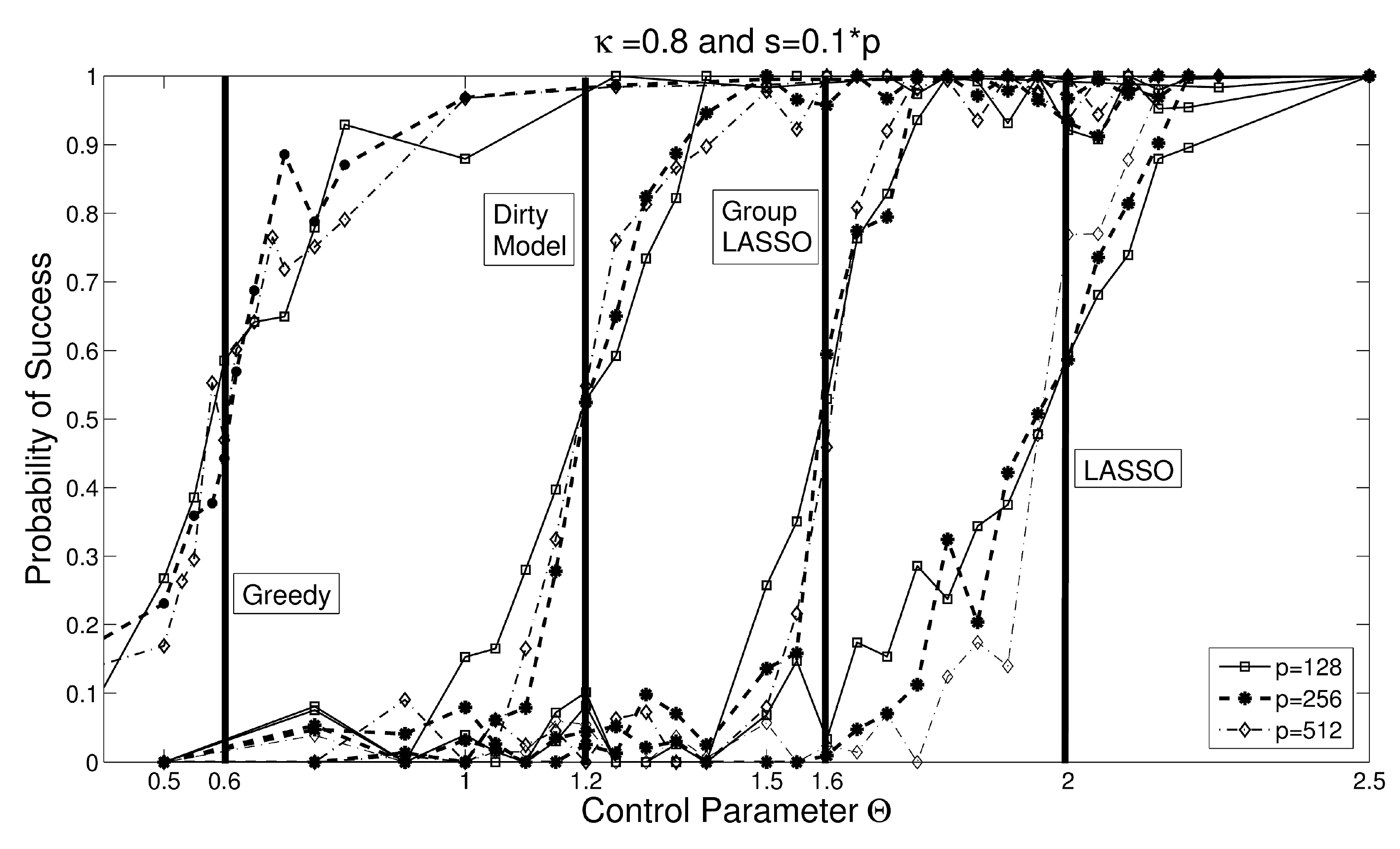}
\label{fig:fig1c}
}
\label{fig:fig1}
\caption{\small Probability of success in recovering the exact sign support using greedy algorithm, dirty model, Lasso and group LASSO ($\ell_1/\ell_\infty$). For a 2-task problem, the probability of success for different values of feature-overlap fraction $\kappa$ is plotted. Here, we let $s=p/10$ and the values of the parameter and design matrices are i.i.d standard Gaussians and $\sigma=0.1$. Greedy method outperforms all methods in the sample complexity required for sign support recovery.}
\end{figure}


\subsection{Synthetic Data}
To have a common ground for comparison, we run the same experiment used for the comparison of LASSO, group LASSO and dirty model in \cite{NWJoint,JRSR10}. Consider the case where we have $r=2$ tasks each with the support size of $s=p/10$ and suppose these two tasks share a $\kappa$ portion of their supports. The location of non-zero entries are chosen uniformly at random and values of $\beta^*_1$ and $\beta^*_2$ are chosen to be standard Gaussian realizations. Each row of he matrices $X^{(1)}$ and $X^{(2)}$ is distributed as $\mathcal{N}(0,I)$ and each entry of the noise vectors $w_1$ and $w_2$ is a zero-mean Gaussian draw with variance $0.1$. We run the experiment for problem sizes $p\in{128,256,512}$ and for support overlap levels $\kappa\in{0.3,2/3,0.8}$.

We use cross-validation to find the best values of regularizer coefficients. To do so, we choose $\epsilon = c\,\frac{s\log(p)}{n}$, where $c\in[10^{-4},10]$, and $w\in[1,2]$. Notice that this search region is motivated by the requirements of our theorem and can be substantially smaller than the region needs to be searched for $\epsilon$ and $w$ if they are independent. Interestingly, for small number of samples $n$, the ratio $w$ tends to be close to $1$, where for large number of samples, the ratio tends to be close to $2$. We suspect this phenomenon is due to the lack of curvature around the optimal point when we have few samples. The greedy algorithm is more stable if it picks a row as opposed to a single coordinate, even if the improvement of the entire row is comparable to the improvement of a single coordinate.

To compare different methods under this regime, we define a rescaled version of sample size $n$, aka control parameter $\Theta=\frac{n}{s\log\left(p-(2-\kappa)s\right)}.$
For different values of $\kappa$, we plot the probability of success, obtained by averaging over 100 problems, versus the control parameter $\Theta$ in Fig.\ref{fig:fig1}. It can be seen that the greedy method outperforms, i.e., requires less number of samples, to recover the exact sign support of $\beta^*$. 

This result matches the known theoretical guarantees. It is well-known that LASSO has a sharp transition at $\Theta\approx 2$ \cite{WainwrightLasso}\footnote[1]{The exact expression is $\frac{n}{s\log(p)}=2$. Here, we ignore the term $(2-\kappa)s$ comparing to $p$.}, group LASSO ($\ell_1/\ell_\infty$ regularizer) has a sharp transition at $\Theta=4-3\kappa$ \cite{NWJoint} and dirty model has a sharp transition at $\Theta=2-\kappa$ \cite{JRSR10}. Although we do not have a theoretical result, these experiments suggest the following conjecture:

\begin{conjecture}
For two-task problem with $C_{\min}=\rho=1$ and Gaussian designs, the greedy algorithm has a sharp transition at $\Theta = 1 - \frac{\kappa}{2}$.
\end{conjecture}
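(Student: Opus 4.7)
The plan is to establish both halves of the phase transition: achievability when $\Theta > 1 - \kappa/2 + \delta$ and failure when $\Theta < 1 - \kappa/2 - \delta$. For achievability I would instantiate Theorem~\ref{thr:main} in the Gaussian setting and sharpen its constants. Under standardized Gaussian designs one has $C_{\min}=\rho=1$ and $\lambda \asymp \sigma\sqrt{\log(p)/n}$, so the support-recovery conditions are met once $n$ exceeds \emph{some} constant multiple of $s\log(p-(2-\kappa)s)$. The work is in extracting the exact constant $1-\kappa/2$ rather than the unspecified $c$ produced by the general analysis.

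The key structural observation is that in the two-task Gaussian setting, the forward step reduces to a comparison of maxima of (essentially independent) Gaussians. For an inactive singleton $(i,j)$, the reward concentrates at $2\sigma^2 \log(p)/n$; for an inactive row $m$, the reward is $(1/w)$ times a chi-square-like sum over the two tasks, concentrating at a comparable scale when $w\in(1,2)$. I would then carry out a peeling argument: at each forward step, condition on the current support being a subset of the true support, and show that the true feature with the largest residual inner product beats all $p-(2-\kappa)s$ inactive decoys. The constant $1-\kappa/2$ arises because the total number of distinct forward steps across both tasks is $(2-\kappa)s$ (shared rows counted once, exclusive singletons counted per-task); distributing this cost across $2$ tasks yields the sample requirement $n \geq (1-\kappa/2) \, s \log(p-(2-\kappa)s)$.

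For the converse, note that the algorithm's decisions depend only on inner products of design columns with current residuals, so failure follows whenever for some true feature an inactive decoy produces a larger inner product. An extreme-value calculation over the $p-(2-\kappa)s$ inactive columns, analogous to Fletcher--Rangan's sharp analysis of OMP, should pin the threshold at exactly $1-\kappa/2$. The asymmetry is crucial: a shared row, once inserted into $\Ob$, fits both tasks simultaneously and therefore incurs its detection cost only once, whereas non-shared features must be detected separately in their own task. This is precisely what makes the constant $(2-\kappa)/2$ rather than $2-\kappa$ (the dirty-model threshold) or $2$ (LASSO).

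The main obstacle will be handling the coupling between the row-versus-singleton decision rule and the backward pruning steps, which are absent from classical OMP-style analyses. The weighting $1/w$ must be calibrated so that a row wins over its constituent singletons exactly when it should, and a miscalibrated $w$ can push the transition away from $1-\kappa/2$; the experiments in fact show $w$ drifting from $1$ to $2$ as $n$ grows, hinting that the conjectured constant is only achieved under an adaptive choice. A cleaner first step might be to analyze the $\nu \to 0$ (no-backward) variant to isolate the forward threshold, then argue that at the phase boundary backward steps fire with vanishing probability, so they do not perturb the sharp constant. Tracking the row/singleton decision as a joint maximum of two coupled Gaussian fields — rather than two independent ones — is the technical step I expect to be genuinely delicate.
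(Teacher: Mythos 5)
First, a point of calibration: the statement you are proving is labeled a \emph{conjecture}, and the paper offers no proof of it --- the authors explicitly write that they ``do not have a theoretical result'' and that a ``theoretical guarantee for such a tight threshold remains open.'' The only support in the paper is the empirical phase-transition plot. So there is no proof to match yours against; the question is whether your program would actually close the open problem, and as written it would not. The two load-bearing steps both break on the forward--\emph{backward} structure of the algorithm. For achievability, your peeling argument ``conditions on the current support being a subset of the true support,'' but that event is exactly what fails for this class of algorithms near the threshold: decoys are routinely added and only later pruned, and the entire point of Lemmas~\ref{LemBackwardStep} and \ref{LemStoppingSize} in the paper is to control the size of the spurious set $(\Os\cup\Ob)-(\Omega^*_s\cup\Omega^*_b)$ rather than to assume it is empty. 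An induction that maintains ``support so far $\subseteq$ true support'' proves a statement about OMP, not about Algorithm~\ref{Alg:General}. Symmetrically, your converse --- ``failure follows whenever for some true feature an inactive decoy produces a larger inner product'' --- is simply false here: a decoy that wins a forward step can be removed in a subsequent backward step once the true features enter, so a Fletcher--Rangan-style extreme-value lower bound on the first wrong selection does not lower-bound the error of the final output. A valid converse must show either that the stopping condition $\mu^{(k)}\le\epsilon$ fires before the true support is covered, or that the algorithm stabilizes at a support the backward step cannot repair.

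There is also a gap in the constant itself. Your accounting ($(2-\kappa)s$ detection events spread over $2n$ task-samples) reproduces $1-\kappa/2$ only if each detection costs $1\cdot s\log p$ in the units where LASSO and OMP cost $2\cdot s\log p$; at $\kappa=0$ the conjecture claims the greedy method beats per-task LASSO by exactly a factor of $2$, and nothing in your sketch derives that factor --- it is asserted. Finally, your own closing observation (that the cross-validated $w$ drifts from $1$ to $2$ as $n$ grows) undercuts the claim that a fixed-$(w,\nu)$ run of the algorithm exhibits the conjectured threshold; if the sharp constant is only attained under an $n$-dependent choice of $w$, the conjecture as stated needs either a specified schedule for $w$ or a proof that the threshold is insensitive to $w\in(1,2)$. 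Your $\nu\to 0$ reduction is a reasonable first cut, but note that with no backward step the algorithm is a weighted two-object OMP, and proving that the backward steps ``fire with vanishing probability at the boundary'' is essentially as hard as the full analysis. In short: this is a sensible research plan for an open problem, not a proof, and the two central technical claims need to be replaced before it could become one.
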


\begin{figure}[t]
\centering
\includegraphics[width=0.6\linewidth]{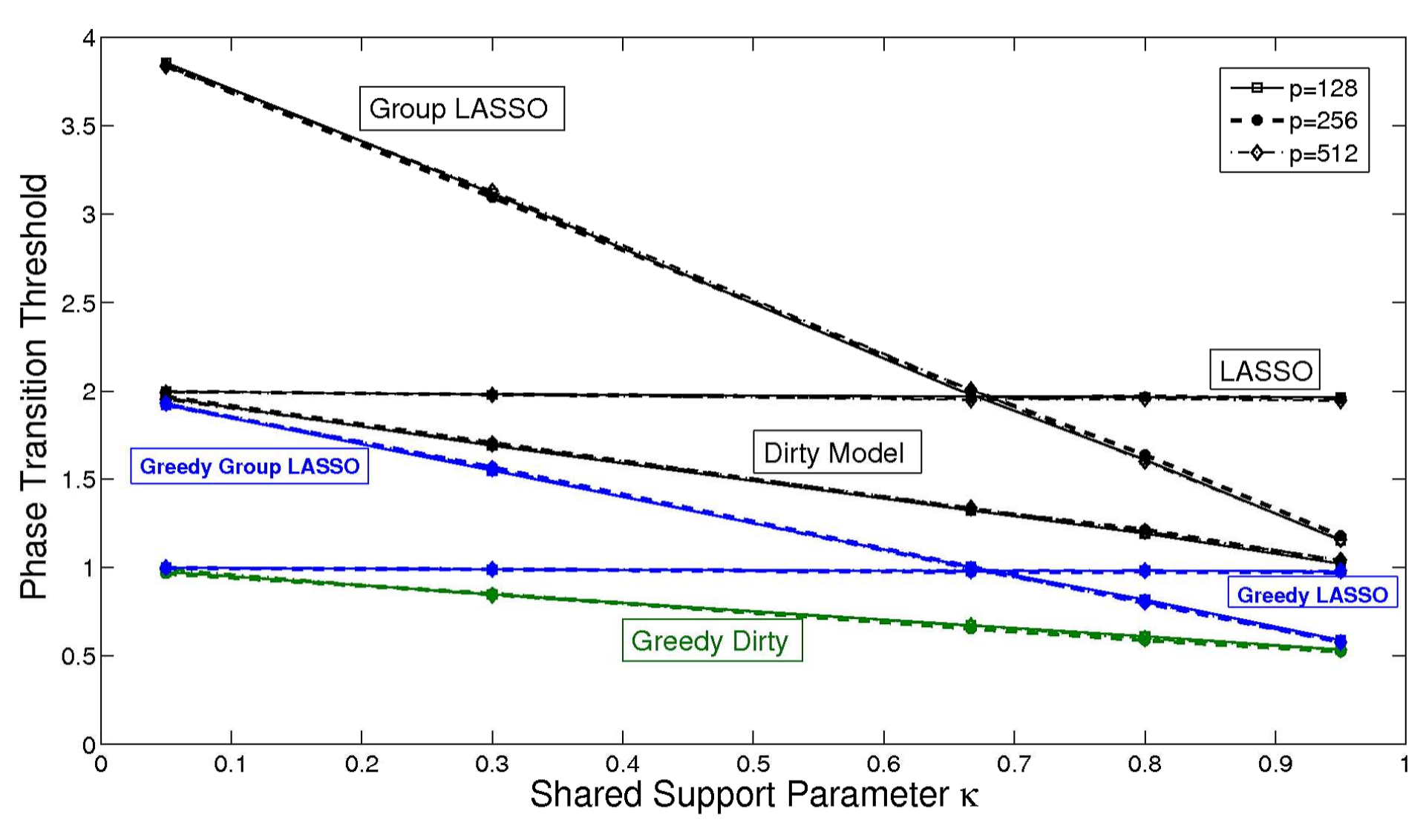}
\caption{\small Phase transition threshold versus the parameter $\kappa$ in a 2-task problem for greedy algorithm, dirty model, LASSO and group LASSO ($\ell_1/\ell_\infty$ regularizer). The y-axis is $\Theta=\frac{n}{s\log(p-(2-\kappa)s)}$. Here, we let $s=p/10$ and the values of the parameter and design matrices are i.i.d standard Gaussians and $\sigma=0.1$. The greedy algorithm shows substantial improvement in sample complexity over the other methods.}
\label{fig:fig2}
\end{figure}

\vspace{-0.4cm}
To investigate our conjecture, we plot the sharp transition thresholds for different methods versus different values of $\kappa\in\{0.05,0.3,2/3,0.8,0.95\}$ for problem sizes $p\in\{128,256,512\}$. Fig~\ref{fig:fig2} shows that the sharp transition threshold for greedy algorithm follows our conjecture with a good precision. Although, theoretical guarantee for such a tight threshold remains open.

\subsection{Handwritten Digits Dataset}


\begin{table*}
\centering
\scriptsize
\begin{tabular}{|c||c|c|c|c|c|}
\hline
\textbf{n}&&\textbf{Greedy}&\textbf{Dirty Model} &\textbf{Group LASSO}& \textbf{LASSO}\\
\hline\hline
10 & Average Classification Error & 6.5\% & 8.6\% & 9.9\% & 10.8\%\\ 
& Variance of Error & 0.4\% & 0.53\% & 0.64\% & 0.51\%\\
& Average Row Support Size & 180 & 171 & 170 & 123\\
& Average Support Size & 1072 & 1651 & 1700 & 539\\
\hline
20 & Average Classification Error & 2.1\% & 3.0\% & 3.5\% & 4.1\%\\ 
& Variance of Error & 0.44\% & 0.56\% & 0.62\% & 0.68\%\\
& Average Row Support Size & 185 & 226 & 217 & 173\\
& Average Support Size & 1120 & 2118 & 2165 & 821\\
\hline
40 & Average Classification Error & 1.4\% & 2.2\% & 3.2\% & 2.8\%\\ 
& Variance of Error & 0.48\% & 0.57\% & 0.68\% & 0.85\%\\
& Average Row Support Size & 194 & 299 & 368 & 354\\
& Average Support Size & 1432 & 2761 & 3669 & 2053\\
\hline
\end{tabular}
\caption{\small Handwriting Classification Results for greedy algorithm, dirty model, group LASSO and LASSO. The greedy method provides much better classification errors with simpler models. The greedy model selection is more consistent as the number of samples increases.}
\label{tab:tab1}
\end{table*}

We use the handwritten digit dataset \cite{DSet} that is used by a number of papers \cite{PERTHE,HENIY,JRSR10} as a reliable dataset for optical handwritten digit recognition algorithms. The dataset contains $p=649$ features of handwritten numerals 0-9 ($r=10$ tasks) extracted from a collection of Dutch utility maps. The dataset provides $200$ samples of each digit written by different people. We take $n/10$ samples from each digit and combine them to a big matrix $X\in\real^{n\times p}$, i.e., we set $X^{(i)}=X$ for all $i\in\{1,\ldots,10\}$. We construct the response vectors $y_i$ to be $1$ if the corresponding row in $X$ is an instance of $i^{th}$ digit and zero otherwise. Clearly, $y_i$'s will have a disjoint support sets. We run all four algorithms on this data and report the results.

Table~\ref{tab:tab1} shows the results of our analysis for different sizes of the training set $n$. We measure the classification error for each digit to get the 10-vector of errors. Then, we find the average error and the variance of the error vector to show how the error is distributed over all tasks. Again, in all methods, parameters are chosen via cross-validation. It can be seen that the greedy method provides a more consistent model selection as the model complexity does not change too much as the number of samples increases while the classification error decreases substantially. In all cases, we get $\% 25-\% 30$ improvement in classification error.


\bibliographystyle{plainnat}
\bibliography{DirtyGreedy}

\appendix
\section{Auxiliary Lemmas for Theorem~1}


Note that when the algorithm terminates, the forward step fails to go through. This entails that 
\begin{equation}
\begin{aligned}
\Loss(\widehat{\beta}) - \inf_{m \notin \Ob,\alpha \in \real^r} \Loss(\widehat{\beta} +  e_m\,\alpha^T) &< w\epsilon\\
\Loss(\widehat{\beta}) - \inf_{(i,j) \notin \Os,\gamma \in \real} \Loss(\widehat{\beta} + \gamma e_ie_j^T) &< \epsilon.
\end{aligned}
\nonumber
\end{equation}
Since our loss function is separable with respect to tasks, i.e., $\Loss(\widehat{\beta})=\sum_j\Loss(\widehat{\beta}^{(j)}e_j^T)$, for a fixed task $j$, we can rewrite the second inequality as
\begin{equation}
\begin{aligned}
\Loss(\widehat{\beta}^{(j)}e_j^T) - \inf_{(i,j) \notin \Os,\gamma \in \real} \Loss(\widehat{\beta}^{(j)}e_j^T + \gamma e_ie_j^T) &< \epsilon.
\end{aligned}
\nonumber
\end{equation}

The next lemma shows that this has the consequence of upper bounding the deviation in loss between the estimated parameters $\widehat{\beta}$ and the true parameters $\beta^*$.

\begin{lemma}[Stopping Forward Step] When the algorithm stops with parameter $\widehat{\beta}$, we have
\label{LemForwardStep}
\begin{align}
	\left|\Loss\left(\widehat{\beta}^{(j)}e_j^T\right) - \Loss\left(\beta^{*(j)}e_j^T\right)\right| < 2\rho C_{\min} \sqrt{|(\Omega^{*(j)}_s\cup\Omega^*_b)-(\Os^{(j)}\cup\Ob)|\epsilon} \; \left\|\widehat{\beta}^{(j)} - \beta^{*(j)}\right\|_2. 
\end{align}
\end{lemma}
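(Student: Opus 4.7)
By separability of $\Loss$ we have $\Loss(\widehat{\beta}^{(j)} e_j^T) - \Loss(\beta^{*(j)} e_j^T) = \tfrac{1}{2 n_j}(\|y^{(j)} - X^{(j)}\widehat{\beta}^{(j)}\|_2^2 - \|y^{(j)} - X^{(j)}\beta^{*(j)}\|_2^2)$, a scalar function of $\widehat{\beta}^{(j)}$ and $\beta^{*(j)}$ alone; call this per-task quadratic $\Loss_j(\cdot)$. Setting $v := \widehat{\beta}^{(j)} - \beta^{*(j)}$, the exact Taylor identity for a quadratic yields
\[
\Loss_j(\widehat{\beta}^{(j)}) - \Loss_j(\beta^{*(j)}) = \langle \nabla \Loss_j(\widehat{\beta}^{(j)}), v\rangle - \tfrac{1}{2} v^T Q^{(j)} v,
\]
where $Q^{(j)}$ is the task-$j$ Hessian. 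The plan is to bound the inner product using the forward-step stopping condition, and then deal with the quadratic term either by discarding it (when the sign is favorable) or by a symmetric averaged identity that cancels it.

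The key structural point is that $\widehat{\beta}^{(j)}$ is the exact unconstrained minimizer of $\Loss_j$ on the support $\Os^{(j)} \cup \Ob$, so $[\nabla \Loss_j(\widehat{\beta}^{(j)})]_i = 0$ for all $i \in \Os^{(j)} \cup \Ob$. Since $v$ is supported in $(\Os^{(j)} \cup \Ob) \cup (\Omega^{*(j)}_s \cup \Omega^*_b)$, the inner product $\langle \nabla \Loss_j(\widehat{\beta}^{(j)}), v\rangle$ receives contributions only on $D := (\Omega^{*(j)}_s \cup \Omega^*_b) \setminus (\Os^{(j)} \cup \Ob)$, so Cauchy--Schwarz reduces the problem to estimating $\|[\nabla \Loss_j(\widehat{\beta}^{(j)})]_D\|_2$ and pairing it with $\|v\|_2$.

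The entry-wise gradient bound comes directly from stopping. For each $i \in D$ we have $(i,j)\notin\Os$ and $i\notin\Ob$, so the assumed failure of the forward step at $(i,j)$ reads $\Loss(\widehat{\beta}) - \min_{\gamma\in\real}\Loss(\widehat{\beta} + \gamma e_i e_j^T) < \epsilon$. Explicit scalar minimization in $\gamma$ evaluates the left side to $[\nabla \Loss_j(\widehat{\beta}^{(j)})]_i^2 / (2 Q^{(j)}_{ii})$, hence $|[\nabla \Loss_j(\widehat{\beta}^{(j)})]_i| < \sqrt{2 Q^{(j)}_{ii}\epsilon}$. Applying REP to $\delta = e_i$ gives $Q^{(j)}_{ii} \leq \|Q^{(j)} e_i\|_2 \leq \rho C_{\min}$, so summing over $D$ yields $\|[\nabla \Loss_j(\widehat{\beta}^{(j)})]_D\|_2 < \sqrt{2\rho C_{\min} |D|\epsilon}$. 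Plugging this back, and discarding the nonnegative $\tfrac{1}{2} v^T Q^{(j)} v$, delivers the one-sided estimate $\Loss_j(\widehat{\beta}^{(j)}) - \Loss_j(\beta^{*(j)}) \leq \sqrt{2\rho C_{\min}|D|\epsilon}\,\|v\|_2$.

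The step I expect to be the main obstacle is matching this on the reverse direction: when $\Loss_j(\beta^{*(j)}) > \Loss_j(\widehat{\beta}^{(j)})$ the quadratic term has the unfavorable sign. The clean remedy is to use the symmetric averaged identity
\[
\Loss_j(\widehat{\beta}^{(j)}) - \Loss_j(\beta^{*(j)}) = \tfrac{1}{2}\langle \nabla \Loss_j(\widehat{\beta}^{(j)}) + \nabla \Loss_j(\beta^{*(j)}), v\rangle,
\]
which is obtained by averaging the two Taylor expansions around the endpoints and in which the second-order terms exactly cancel. The first gradient is controlled exactly as above; the second satisfies $\|\nabla \Loss_j(\beta^{*(j)})\|_\infty \leq \lambda$ by definition, and under the Theorem's stopping-threshold hypothesis $\epsilon \gtrsim \lambda^2$ its contribution is dominated by the first term and is absorbed into the overall constant $2\rho C_{\min}$. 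Tracking those constants carefully is the delicate bookkeeping needed to pass from $\sqrt{2\rho C_{\min}}$ to the stated $2\rho C_{\min}$ and to close the absolute-value bound.
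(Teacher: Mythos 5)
Your core mechanism for the direction $\Loss(\widehat{\beta}^{(j)}e_j^T)-\Loss(\beta^{*(j)}e_j^T)\le\cdots$ is correct and is a genuinely different, more elementary route than the paper's. The paper never bounds gradient coordinates individually: it plugs the test direction $\eta\widehat{\Delta}^{(j)}_i$ into each failed single-element forward step, sums over $D=(\Omega^{*(j)}_s\cup\Omega^*_b)-(\Os^{(j)}\cup\Ob)$, upper-bounds the result by $\eta\big(\Loss(\beta^{*(j)}e_j^T)-\Loss(\widehat{\beta}^{(j)}e_j^T)\big)+\eta^2\rho^2C_{\min}^2\|\widehat{\Delta}^{(j)}\|_2^2$ via convexity and restricted smoothness, and optimizes over $\eta$ to get a bound on the \emph{square} of the loss difference. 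Your version (explicit minimization in $\gamma$ gives $|\nabla_i\Loss_j(\widehat{\beta}^{(j)})|^2/(2H_{ii})<\epsilon$, then Cauchy--Schwarz on $D$, then drop the quadratic term) is the same estimate unrolled, and it produces the identical constant once you fix a normalization slip: the operative hypothesis used throughout the paper's appendix is the restricted-smoothness form $\Loss(\beta+\delta)-\Loss(\beta)-\langle\nabla\Loss(\beta),\delta\rangle\le\rho^2C_{\min}^2\|\delta\|_2^2$ (note the paper's literal $REP$ display is dimensionally inconsistent), which gives $H_{ii}\le 2\rho^2C_{\min}^2$ and hence $|\nabla_i\Loss_j(\widehat{\beta}^{(j)})|<2\rho C_{\min}\sqrt{\epsilon}$ exactly — not your $\sqrt{2\rho C_{\min}\epsilon}$. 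With that correction there is no constant to chase in this direction.

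The genuine gap is your treatment of the absolute value. The trapezoid identity does hold for a quadratic, but it imports $\|\nabla\Loss_j(\beta^{*(j)})\|_\infty\le\noiseLevel$ and leaves you with an additive $\tfrac{1}{2}\noiseLevel\sqrt{\widehat{s}_j}\,\|v\|_2$ that is simply not present in the stated bound: the lemma is a deterministic, $\noiseLevel$-free statement with the explicit constant $2\rho C_{\min}$, and it is invoked as such inside Lemma~\ref{LemErrorBound} \emph{before} the relation $\epsilon\gtrsim\noiseLevel^2 s^*$ or the stopping-size bound $\widehat{s}_j\le\eta s_j^*$ are available; "absorbed into the overall constant" is therefore not a legitimate move here (and it fails outright when $D=\emptyset$, where the stated right-hand side is zero but your $\noiseLevel$ term is not). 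The paper's optimization-over-$\eta$ device is precisely what lets it claim the two-sided bound without ever touching $\nabla\Loss(\beta^*)$, since $-|D|\epsilon<-A^2/(4\rho^2C_{\min}^2\|\widehat{\Delta}^{(j)}\|_2^2)$ controls $|A|$ directly. (One can note that only the direction you do prove cleanly is what Lemma~\ref{LemErrorBound} actually consumes, via $G(\widehat{\Delta}^{(j)})\le 0$; but as a proof of the lemma as stated, your reverse direction does not close.)
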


\begin{proof}
Let $\widehat{\delpar} = \beta^* - \widehat{\beta}$. For any $\eta\in\real$, we have
\begin{equation}
\begin{aligned}	
-|(\Omega^{*(j)}_s\cup\Omega^*_b)&-(\Os^{(j)}\cup\Ob)|\epsilon\\  &< \sum_{(i,j) \in (\Omega^{*(j)}_s\cup\Omega^*_b)-(\Os^{(j)}\cup\Ob)}\!\!\!\!\! \Big(\Loss\left(\widehat{\beta}^{(j)}e_j^T\! + \eta \widehat{\delpar}_{i}^{(j)} e_ie_j^T\right)\!\! - \Loss\left(\widehat{\beta}^{(j)}e_j^T\right)\!\!\Big)\\ &\leq \eta\tr{\nabla\L(\widehat{\beta}^{(j)}e_j^T)}{\widehat{\delpar}_{(\Omega^{*(j)}_s\cup\Omega^*_b)-(\Os^{(j)}\cup\Ob)}^{(j)}} +\eta^2\rho^2C_{\min}^2\|\widehat{\delpar}_{(\Omega^{*(j)}_s\cup\Omega^*_b)-(\Os^{(j)}\cup\Ob)}^{(j)}\|_2^2\\ &\leq \eta \Big(\Loss\left(\beta^{*(j)}e_j^T\right) - \Loss\left(\widehat{\beta}^{(j)}e_j^T\right)\Big) + \eta^2 \rho^2C_{\min}^2\|\widehat{\delpar}^{(j)}\|_2^2.
\end{aligned}
\label{eq:epsbound}
\end{equation}
Here, we use the fact that $\nabla\Loss(\widehat{\beta}^{(j)}e_j^T)$ is zero on the support of $\widehat{\beta}^{(j)}$. Optimizing the RHS over $\eta$, we obtain
\begin{equation}
\begin{aligned}
-|(\Omega^{*(j)}_s\cup\Omega^*_b)-(\Os^{(j)}\cup\Ob)|\epsilon &< -\, \frac{\Big(\Loss(\beta^{*(j)}e_j^T) - \Loss\left(\widehat{\beta}^{(j)}e_j^T\right)\Big)^2}{4\, \rho^2 C_{\min}^2\, \|\widehat{\delpar}^{(j)}\|_2^2},
\end{aligned}
\nonumber
\end{equation}
whence the lemma follows.\\
\end{proof}

\begin{lemma}[Stopping Error Bound] When the algorithm stops with parameter $\widehat{\beta}$, we have
\label{LemErrorBound}
\begin{equation}
\|\widehat{\beta}^{(j)} - \beta^{*(j)}\|_2 \leq \frac{1}{C_{\min}} \left(\frac{\noiseLevel}{C_{\min}} \sqrt{|(\Omega^{*(j)}_s\cup\Omega^*_b)\cup(\Os^{(j)}\cup\Ob)|} +2\rho\sqrt{|(\Omega^{*(j)}_s\cup\Omega^*_b)-(\Os^{(j)}\cup\Ob)|\epsilon}\right).
\nonumber
\end{equation}	
\end{lemma}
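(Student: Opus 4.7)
The plan is to obtain a quadratic inequality for $\|\widehat{\beta}^{(j)} - \beta^{*(j)}\|_2$ by combining the stopping-time upper bound on the loss discrepancy from Lemma~\ref{LemForwardStep} with a strong-convexity lower bound derived from REP, and then solving the resulting inequality. Writing $\widehat{\delpar}^{(j)} := \widehat{\beta}^{(j)} - \beta^{*(j)}$ and introducing the shorthand $T := (\Omega^{*(j)}_s\cup\Omega^*_b) \cup (\Os^{(j)}\cup\Ob)$ for the union of supports and $T_1 := (\Omega^{*(j)}_s\cup\Omega^*_b) - (\Os^{(j)}\cup\Ob)$ for the missed portion, the vector $\widehat{\delpar}^{(j)}$ is supported on $T$, so $|T|$ controls its effective sparsity.

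First, using separability in $j$ and the fact that $\Loss$ is quadratic, I would Taylor-expand about $\beta^{*(j)}$ to write
\[
\Loss(\widehat{\beta}^{(j)}e_j^T) - \Loss(\beta^{*(j)}e_j^T) \;=\; \tfrac{1}{n_j}\langle \nabla^{(j)},\widehat{\delpar}^{(j)}\rangle \;+\; \tfrac{1}{2n_j}\|X^{(j)}\widehat{\delpar}^{(j)}\|_2^2,
\]
and then bound the two pieces separately. The linear term is handled by H\"older's inequality and the $|T|$-sparsity of $\widehat{\delpar}^{(j)}$:
\[
|\langle \nabla^{(j)},\widehat{\delpar}^{(j)}\rangle| \;\leq\; \|\nabla^{(j)}\|_\infty \,\|\widehat{\delpar}^{(j)}\|_1 \;\leq\; \noiseLevel\,\sqrt{|T|}\,\|\widehat{\delpar}^{(j)}\|_2,
\]
while the quadratic term is bounded below by $C_{\min}^2 \|\widehat{\delpar}^{(j)}\|_2^2$ via REP, which by Lemma~\ref{lem:LemStoppingSize} is applicable at sparsity $|T| \leq \widehat{s}_j \leq \RSMult s^*_j$.

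Next, I would chain this lower bound with the upper bound from Lemma~\ref{LemForwardStep},
\[
\bigl|\Loss(\widehat{\beta}^{(j)}e_j^T)-\Loss(\beta^{*(j)}e_j^T)\bigr| \;\leq\; 2\rho C_{\min}\sqrt{|T_1|\epsilon}\,\|\widehat{\delpar}^{(j)}\|_2,
\]
to obtain a quadratic inequality of the form
\[
C_{\min}^2 \|\widehat{\delpar}^{(j)}\|_2^2 \;\leq\; \noiseLevel\sqrt{|T|}\,\|\widehat{\delpar}^{(j)}\|_2 \;+\; 2\rho C_{\min}\sqrt{|T_1|\epsilon}\,\|\widehat{\delpar}^{(j)}\|_2.
\]
Dividing through by $\|\widehat{\delpar}^{(j)}\|_2$ (the case of equality to zero is trivial) then rearranges into
\[
\|\widehat{\delpar}^{(j)}\|_2 \;\leq\; \frac{1}{C_{\min}}\Bigl(\tfrac{\noiseLevel}{C_{\min}}\sqrt{|T|} \;+\; 2\rho\sqrt{|T_1|\epsilon}\Bigr),
\]
which is the claimed bound.

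The main obstacle is careful bookkeeping of the normalization constants: the paper's conventions for $\Loss$ (with its $\tfrac{1}{2n_j}$ factor), for $\nabla^{(j)} = -X^{(j)T}(y^{(j)}-X^{(j)}\beta^{*(j)})$ (without the $1/n_j$), and for $Q^{(j)}$ must be tracked through the Taylor expansion so that the strong-convexity coefficient from REP exactly lines up against the Lemma~\ref{LemForwardStep} upper bound without spurious factors. A secondary but essential point is justifying that REP is available at the required sparsity; this is handled by invoking Lemma~\ref{lem:LemStoppingSize} to ensure $|T| \leq \widehat{s}_j$ and thus that $REP(\widehat{s}_j)$ suffices.
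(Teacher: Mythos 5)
Your proof is correct, and it uses the same two ingredients as the paper --- the stopping-time bound of Lemma~\ref{LemForwardStep} as an upper bound on $\Loss(\widehat{\beta}^{(j)}e_j^T)-\Loss(\beta^{*(j)}e_j^T)$, and restricted strong convexity plus H\"older ($|\tr{\grad\Loss(\beta^*)}{\widehat{\delpar}^{(j)}}|\le \noiseLevel\sqrt{|T|}\,\|\widehat{\delpar}^{(j)}\|_2$) as a lower bound --- but it extracts the conclusion by a more direct route. The paper introduces the auxiliary function $G(\delpar)=\Loss(\beta^{*(j)}e_j^T+\delpar e_j^T)-\Loss(\beta^{*(j)}e_j^T)-2\rho C_{\min}\sqrt{|T_1|\epsilon}\,\|\delpar\|_2$, observes $G(0)=0$ and $G(\delparHat^{(j)})\le 0$, and uses sub-homogeneity of $G$ to argue that if $G>0$ on the sphere $\{\|\delpar\|_2=r,\ \|\delpar\|_0\le\widehat{s}_j\}$ then $\|\delparHat^{(j)}\|_2\le r$; you instead evaluate both bounds directly at $\widehat{\delpar}^{(j)}$, obtain $C_{\min}^2\|\widehat{\delpar}^{(j)}\|_2^2\le(\noiseLevel\sqrt{|T|}+2\rho C_{\min}\sqrt{|T_1|\epsilon})\|\widehat{\delpar}^{(j)}\|_2$, and divide. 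Your shortcut is legitimate here precisely because $\widehat{\delpar}^{(j)}$ is automatically supported on $T$ with $|T|=\widehat{s}_j$, so $REP(\widehat{s}_j)$ applies to it directly; the paper's $G$-function/sphere argument is the more general M-estimation template, needed when strong convexity only holds on a restricted cone or neighborhood that the error vector is not known a priori to inhabit, and buys nothing extra in this setting. You are also right to flag the normalization bookkeeping (the $1/n_j$ in $\grad\Loss$ versus the paper's unnormalized $\nabla^{(j)}$, and the factor in front of $\|X^{(j)}\widehat{\delpar}^{(j)}\|_2^2$ versus $C_{\min}^2$): the paper itself elides these constants, so your derivation matches its conventions.
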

\begin{proof}
For $\delpar\in\real^p$, let 
\begin{equation}
	G(\delpar) = \Loss\left(\beta^{*(j)}e_j^T +  \delpar e_j^T\right) - \Loss\left(\beta^{*(j)}e_j^T\right) - 2\rho C_{\min} \sqrt{|(\Omega^{*(j)}_s\cup\Omega^*_b)-(\Os^{(j)}\cup\Ob)| \epsilon}\; \|\delpar\|_2.
\nonumber
\end{equation}
It can be seen that $G(0) = 0$, and from the previous lemma, $G(\delparHat^{(j)}) \le 0$. Further, $G(\delpar)$ is sub-homogeneous (over a limited range): $G(t \delpar) \le t G(\delpar)$ for $t \in [0,1]$ by basic properties of the convex function. Thus, for a carefully chosen $r > 0$, if we show that $G(\delpar) > 0$ for all $\delpar \in \{\delpar : \|\delpar\|_{2} = r,\;\|\delpar\|_0 \le \widehat{s}_j\}$, where, $\widehat{s}_j=|(\Omega^{*(j)}_s\cup\Omega^*_b)\cup(\Os^{(j)}\cup\Ob)|$ is as defined in the proof of the theorem, then, it follows that $\|\delparHat^{(j)}\|_2 \le r$.
If not, then there would exist some $t \in [0,1)$ such that $\|t \delparHat^{(j)}\| = r$, whence we would arrive at the contradiction
\begin{align*}
	0 < G(t \delparHat^{(j)}) \le t G(\delparHat^{(j)}) \le 0.
\end{align*}

Thus, it remains to show that $G(\delpar) > 0$ for all $\delpar \in \{\delpar : \|\delpar\|_{2} = r,\;\|\delpar\|_0 \le \widehat{s}_j\}$. By restricted strong convexity property of $\Loss(\cdot)$, we have
\begin{equation}
\begin{aligned}
\Loss(\beta^{*(j)}e_j^T + \delpar e_j^T) - \Loss(\beta^{*(j)}e_j^T) &= \sum_{j=1}^r\left(\Loss(\beta^* + \delpar e_j^T) - \Loss(\beta^*)\right)\\ &\ge \tr{\grad \Loss(\beta^*)}{\delpar} + C_{\min}^2 \left\|\delpar\right\|_2^2.
\end{aligned}
\nonumber
\end{equation}
We can establish
\begin{align*}
	\tr{\grad \Loss(\beta^*)}{\delpar} &\ge  - \left|\tr{\grad \Loss(\beta^*)}{\delpar}\right|\\
											&\ge - \left\|\grad \Loss(\beta^*)\right\|_\infty \left\|\delpar\right\|_1 = -\noiseLevel \left\|\delpar\right\|_1,
\end{align*}
and hence,
\begin{align*}
G(\delpar) &\ge - \noiseLevel \|\delpar\|_1 + C_{\min}^2 \|\delpar\|_2^2 - 2\rho C_{\min}\sqrt{ |(\Omega^{*(j)}_s\cup\Omega^*_b)-(\Os^{(j)}\cup\Ob)| \epsilon } \|\delpar\|_2\\
						&>  \Big(- \noiseLevel \sqrt{|(\Omega^{*(j)}_s\cup\Omega^*_b)\cup(\Os^{(j)}\cup\Ob)|} + C_{\min}^2 \|\delpar\|_2\\ &\qquad\qquad\qquad\qquad\qquad\qquad- 2\rho C_{\min}\sqrt{ |(\Omega^{*(j)}_s\cup\Omega^*_b)-(\Os^{(j)}\cup\Ob)| \epsilon }  \Big) \|\delpar\|_2 > 0,
\end{align*}
if $\|\delpar\|_2 = r$ for $$r = \frac{1}{C_{\min}} \left(\frac{\noiseLevel}{C_{\min}} \sqrt{|(\Omega^{*(j)}_s\cup\Omega^*_b)\cup(\Os^{(j)}\cup\Ob)|} + 2\rho \sqrt{|(\Omega^{*(j)}_s\cup\Omega^*_b)-(\Os^{(j)}\cup\Ob)| \epsilon}\right).$$

This concludes the proof of the lemma.\\
\end{proof}

Next, we note that when the algorithm terminates, the backward step with the current parameters has failed to go through. This entails that 
\begin{equation}
\begin{aligned}
\inf_{m \in \Ob} \Loss(\widehat{\beta} -  e_m\widehat{\beta}_m) - \Loss(\widehat{\beta})\quad &> \nu\,w\epsilon\\
\inf_{(i,j) \in \Os} \Loss(\widehat{\beta} - \widehat{\beta}^{(j)}_i e_ie_j^T) - \Loss(\widehat{\beta}) &> \nu\epsilon.
\end{aligned}
\end{equation}
The next lemma shows the consequence of this bound.

\begin{lemma}[Stopping Backward Step]
\label{LemBackwardStep}
When the algorithm stops with parameter $\widehat{\beta}$, we have
\begin{equation}\nonumber
\left\|\widehat{\beta}_{(\Os^{(j)} \cup \Ob) - (\Omega^{*(j)}_s\cup \Omega^*_b)}^{(j)}\right\|_2^2 \ge \frac{w\nu\epsilon}{r\rho^2 C_{\min}^2} |(\Os^{(j)} \cup \Ob) - (\Omega^{*(j)}_s\cup \Omega^*_b)|.
\end{equation}
\end{lemma}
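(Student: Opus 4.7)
The aim is to translate the two termination inequalities displayed just above the lemma — no singleton and no row can be backward-removed without raising the loss by at least $\nu\epsilon$ and $\nu w\epsilon$ respectively — into a magnitude lower bound on the entries of $\widehat{\beta}^{(j)}$ inside the ``spurious'' set $T:=(\Os^{(j)}\cup\Ob)-(\Omega^{*(j)}_s\cup\Omega^*_b)$, and then sum those bounds over $T$.

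The decisive simplification is that $\Loss$ is quadratic and $\widehat{\beta}$ is the exact unconstrained OLS minimizer on the active support $\Os\cup\Ob$, so $\nabla\Loss(\widehat{\beta})$ vanishes on every coordinate of $\Os\cup\Ob$. Perturbing $\widehat{\beta}$ along any direction supported in $\Os\cup\Ob$ therefore kills the first-order Taylor term and leaves only the pure Hessian term. Combining task-separability with the upper REP estimate $\|X^{(j')}e_i\|_2^2/n_{j'}\le\rho^2 C_{\min}^2$ applied to the one-sparse vectors $e_i$ gives the identities
\begin{align*}
\Loss(\widehat{\beta}-\widehat{\beta}^{(j)}_ie_ie_j^T)-\Loss(\widehat{\beta}) &=\tfrac{1}{2n_j}(\widehat{\beta}^{(j)}_i)^2\|X^{(j)}e_i\|_2^2\le\tfrac{\rho^2C_{\min}^2}{2}(\widehat{\beta}^{(j)}_i)^2,\\
\Loss(\widehat{\beta}-e_m\widehat{\beta}_m)-\Loss(\widehat{\beta}) &=\sum_{j'=1}^r\tfrac{1}{2n_{j'}}(\widehat{\beta}^{(j')}_m)^2\|X^{(j')}e_m\|_2^2\le\tfrac{\rho^2C_{\min}^2}{2}\|\widehat{\beta}_m\|_2^2.
\end{align*}
Substituted into the two backward-failure inequalities, these yield a coordinate-wise lower bound $(\widehat{\beta}^{(j)}_i)^2>2\nu\epsilon/(\rho^2C_{\min}^2)$ for every singleton $(i,j)\in\Os$, and a row-wise lower bound $\|\widehat{\beta}_m\|_2^2>2\nu w\epsilon/(\rho^2C_{\min}^2)$ for every row $m\in\Ob$.

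The last step is to partition $T=T_s\sqcup T_b$ with $T_s\subseteq\Os^{(j)}$ and $T_b\subseteq\Ob$, and sum. Each $i\in T_s$ contributes $2\nu\epsilon/(\rho^2 C_{\min}^2)$, which already dominates the per-index budget $w\nu\epsilon/(r\rho^2 C_{\min}^2)$ claimed in the lemma because $w<r$, so singletons are essentially free. I expect the main technical point to be the row contribution: the row-removal inequality only controls the full cross-task energy $\|\widehat{\beta}_m\|_2^2$, whereas the lemma wants a bound on its single column-$j$ component $(\widehat{\beta}^{(j)}_m)^2$. The factor $1/r$ on the right-hand side of the statement is exactly the price of this attribution — one distributes the row's total energy across its $r$ columns, obtaining $(\widehat{\beta}^{(j)}_m)^2\ge 2\nu w\epsilon/(r\rho^2 C_{\min}^2)$ in the averaged sense that the downstream use in Theorem~\ref{thr:main}(b) actually needs (where the lemma is aggregated through $\|\widehat{\beta}^{(j)}_T\|_2^2\le\|\widehat{\beta}^{(j)}-\beta^{*(j)}\|_2^2$ and then combined with the error bound from part (a)). Writing up this per-column attribution carefully, and confirming that the bookkeeping $|T|=|T_s|+|T_b|$ does not lose the constant, is the principal chore that turns the above plan into a proof.
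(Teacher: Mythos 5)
Your route is essentially the paper's: both arguments combine the two backward-failure inequalities with the fact that $\nabla\Loss(\widehat{\beta})$ vanishes on $\Os\cup\Ob$ (so only the quadratic term of the expansion survives), bound that quadratic term from above by $\rho^2C_{\min}^2$ times the squared mass of the removed coordinates via the upper REP inequality, and use $w<r$ to put singleton and row contributions on the common per-element scale $w\nu\epsilon/r$. The paper does this in a single aggregate chain (summing the failure inequalities over the whole spurious set and weighting the row sum by $1/r$), whereas you derive per-element magnitude lower bounds and then sum; that reorganization is immaterial. The one step you correctly single out as the ``principal chore'' is, however, the one your sketch does not actually close: from $\|\widehat{\beta}_m\|_2^2 \ge \nu w\epsilon/(\rho^2C_{\min}^2)$, dividing by $r$ lower-bounds the \emph{average} of $(\widehat{\beta}_m^{(j')})^2$ over the $r$ columns, not the specific entry $(\widehat{\beta}_m^{(j)})^2$ that enters $\|\widehat{\beta}^{(j)}_{(\Os^{(j)}\cup\Ob)-(\Omega^{*(j)}_s\cup\Omega^*_b)}\|_2$ for the fixed task $j$; a spurious row can carry all of its energy in columns other than $j$, so no per-column attribution of the kind you describe is valid, and the downstream use in Theorem~\ref{thr:main}(b) is genuinely per-task rather than averaged over $j$. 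You should know that the paper's own proof makes exactly the same silent leap: its final display replaces $\tfrac1r\sum_{m}\bigl(\Loss(\widehat{\beta}-e_m\widehat{\beta}_m)-\Loss(\widehat{\beta})\bigr)$, which only controls full-row energies, by the column-$j$ quantity $\|\widehat{\beta}^{(j)}_{\Ob-(\Omega^{*(j)}_s\cup\Omega^*_b)}\|_2^2$ (and the general version, Lemma~\ref{lem:backwardstep}, papers over the same point with a $\|\cdot\|_{2,\infty}$ norm that is likewise not the column-$j$ restriction). So your proposal reproduces the paper's argument, shared gap included, and is in fact more candid than the paper about where the difficulty sits.
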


\begin{proof}
We have
\begin{equation}
\begin{aligned}
&|(\Os^{(j)} \cup \Ob) - (\Omega^{*(j)}_s\cup \Omega^*_b)|\frac{w}{r}\nu\epsilon\\ 
&\qquad\qquad=|\Os^{(j)} - (\Omega^{*(j)}_s\cup \Omega^*_b\cup\Ob)|\frac{w}{r}\nu\epsilon+\frac{1}{r}|\Ob - (\Omega^{*(j)}_s\cup \Omega^*_b)|\nu\,w\epsilon\\ 
&\qquad\qquad\le \!\!\!\!\!\!\sum_{(i,j) \in \Os^{(j)} - (\Omega^{*(j)}_s\cup \Omega^*_b\cup\Ob)} \left(\Loss(\widehat{\beta} - \widehat{\beta}_i^{(j)} e_ie_j^T) - \Loss(\widehat{\beta})\right) + \frac{1}{r}\!\!\!\!\sum_{m \in \Ob - (\Omega^{*(j)}_s\cup \Omega^*_b)} \left(\Loss(\widehat{\beta} - e_m\widehat{\beta}_m) - \Loss(\widehat{\beta})\right)\\
&\qquad\qquad\le \underbrace{\tr{\grad \Loss(\widehat{\beta})}{\widehat{\beta}_{\Os^{(j)} - (\Omega^{*(j)}_s\cup \Omega^*_b\cup\Ob)}}}_{0} + \rho^2 C_{\min}^2 \|\widehat{\beta}_{\Os^{(j)} - (\Omega^{*(j)}_s\cup \Omega^*_b\cup\Ob)}^{(j)}\|_2^2\\ &\qquad\qquad\qquad\qquad\qquad\qquad\qquad+ \underbrace{\tr{\grad \Loss(\widehat{\beta})}{\widehat{\beta}_{\Ob - (\Omega^{*(j)}_s\cup \Omega^*_b)}}}_{0} + \rho^2 C_{\min}^2 \|\widehat{\beta}_{\Ob - (\Omega^{*(j)}_s\cup \Omega^*_b)}^{(j)}\|_2^2\\
&\qquad\qquad= \rho^2C_{\min}^2 \left\|\widehat{\beta}_{(\Os^{(j)} \cup \Ob) - (\Omega^{*(j)}_s\cup \Omega^*_b)}^{(j)}\right\|_2^2,
\end{aligned}
\label{eq:backeps}
\end{equation}
where, the second inequality uses the fact that $[\grad \Loss(\widehat{\beta})]_{\Os^{(j)}\cup\Ob} = 0$.
\end{proof}


\section{Lemmas on the Stopping Size}

\begin{lemma}
\label{LemStoppingSize}
If $\epsilon>\frac{\noiseLevel^2\rho^2}{C_{\min}^2f(\eta)}$ for some $\eta\geq\,2 + \frac{4r\rho^4(\rho^4-\rho^2+2)}{w\nu}$ and $REP\left(\eta s^*_j\right)$ holds, then the algorithm stops with (column) support size $s_j\leq(\eta-1)s^*_j$ for all $j\in\{1,2,\ldots,r\}$.
\end{lemma}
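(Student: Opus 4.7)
The plan is to bound the global counter $k = |\Os|+|\Ob|$ at termination by pitting a lower bound on the cumulative decrease $\L(0)-\L(\widehat{\beta})$ against an upper bound derived from REP together with the appendix lemmas already proved. The hypothesis on $\epsilon$ will then force the stated per-task bound $s_j \le (\eta-1)s^*_j$.

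\textbf{Lower bound.} I will use the one-to-one pairing between backward steps and earlier forward steps already invoked in the Convergence paragraph. Each matched backward step costs at most $\nu<1$ times the reward of its matched forward step, so each such pair contributes strictly positive net decrease to $\L$. The $k$ unmatched forward steps surviving at termination each entered with reward $\mu^{(\cdot)} > \epsilon$, hence contributed actual loss decrease at least $\epsilon$ if they added a singleton and at least $w\epsilon$ if they added a row. Summing,
\[
\L(0)-\L(\widehat{\beta}) \;\ge\; (1-\nu)\bigl(|\Os|\,\epsilon + |\Ob|\,w\,\epsilon\bigr).
\]

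\textbf{Upper bound.} Using task separability $\L(\beta)=\sum_{j=1}^{r}\L^{(j)}(\beta^{(j)})$, for each $j$ I decompose
\[
\L^{(j)}(0)-\L^{(j)}\bigl(\widehat{\beta}^{(j)}e_j^T\bigr)=\bigl[\L^{(j)}(0)-\L^{(j)}(\beta^{*(j)}e_j^T)\bigr]+\bigl[\L^{(j)}(\beta^{*(j)}e_j^T)-\L^{(j)}(\widehat{\beta}^{(j)}e_j^T)\bigr].
\]
The first bracket equals $\tfrac{1}{2n_j}\|X^{(j)}\beta^{*(j)}\|_2^2 + \tfrac{1}{n_j}\langle X^{(j)}\beta^{*(j)},z^{(j)}\rangle$, which $REP(\eta s^*_j)$ together with the definition $\lambda \ge \|\nabla^{(j)}\|_\infty$ bounds by a constant multiple of $\rho^2 C_{\min}^2\|\beta^{*(j)}\|_2^2 + \lambda\sqrt{s^*_j}\|\beta^{*(j)}\|_2$. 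The second bracket is handled directly by Lemma~\ref{LemForwardStep}, and the factor $\|\widehat{\beta}^{(j)}-\beta^{*(j)}\|_2$ it produces is controlled via Lemma~\ref{LemErrorBound} in terms of $\widehat{s}_j$, $\lambda/C_{\min}$, and $\rho\sqrt{s^*_j\epsilon}$. Summing over $j$ and expanding the squared bound yields an inequality of shape
\[
\L(0)-\L(\widehat{\beta})\;\le\; C_1\,\frac{r\rho^4 s^*\lambda^2}{C_{\min}^2} \;+\; C_2\, r\,\rho^4(\rho^4-\rho^2+2)\,\widehat{s}\,\epsilon,
\]
with universal constants $C_1,C_2$ that arise from REP and Cauchy--Schwarz.

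\textbf{Combining, and the main obstacle.} Equating the two bounds, using $\widehat{s}\le k+s^*$, and translating the aggregate $|\Os|+w|\Ob|$ back to a per-task count $|\Os^{(j)}|+|\Ob|$ through the weighting $w\in(1,r)$, rearrangement produces $s_j\le (\eta-1)s^*_j$ precisely when $\eta-2\ge 4r\rho^4(\rho^4-\rho^2+2)/(w\nu)$ and $\epsilon$ exceeds $\lambda^2\rho^2/(C_{\min}^2 f(\eta))$ for an $f(\eta)$ determined by the slack remaining after the noise term $C_1 r\rho^4 s^*\lambda^2/C_{\min}^2$ is absorbed into $(\eta-1)s^*_j\epsilon$. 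The hard part is precisely this translation: a single row addition increments the per-task counter for every task but contributes only once to the global $k$, and its total reward $w\mu_b$ is pooled across tasks in a possibly very uneven way, so no single task is a priori guaranteed a nonzero loss-decrease contribution from a row addition. Overcoming this asymmetry forces the argument to be carried out at the summed-over-$j$ level and then split back per task via $w$, which is exactly where the factor $r/w$ and the specific form of the constraint on $\eta$ emerge.
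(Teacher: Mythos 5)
Your cumulative-accounting strategy has a gap that I do not believe can be repaired. The upper bound you need on $\Loss(0)-\Loss(\widehat{\beta})$ unavoidably contains the signal energy: as you yourself compute, the bracket $\Loss^{(j)}(0)-\Loss^{(j)}(\beta^{*(j)}e_j^T)$ is of order $\rho^2C_{\min}^2\|\beta^{*(j)}\|_2^2$, yet this term has silently vanished by the time you write the combined bound $C_1\,r\rho^4 s^*\lambda^2/C_{\min}^2+C_2\,r\rho^4(\cdots)\widehat{s}\,\epsilon$. Nothing in the hypotheses controls $\|\beta^*\|_2$ from above (only $\beta^*_{\min}$ from below), so when you equate your two bounds the resulting estimate on $|\Os|\epsilon+w|\Ob|\epsilon$ grows without limit as the signal strength grows, and can never take the signal-independent form $s_j\le(\eta-1)s^*_j$. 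This is structural, not a bookkeeping slip: any ledger that measures total progress from $\Loss(0)$ must pay for $\tfrac{1}{2n_j}\|X^{(j)}\beta^{*(j)}\|_2^2$, which is why stopping-size lemmas for forward--backward greedy schemes are not proved this way.

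The paper instead argues locally in time and per task: consider the \emph{first} iteration at which the column support size of task $j$ reaches $(\eta-1)s^*_j+1$. At that instant the immediately preceding backward step failed, so Lemma~\ref{lem:backwardstep} gives a lower bound on $\|\widehat{\beta}^{(j)}\|_2$ restricted to the spurious set $(\Os^{(j)}\cup\Ob)-(\Omega^{*(j)}_s\cup\Omega^*_b)$ of order $\sqrt{(s_j-1-s^*_j)\,w\nu\mu^{(k)}\epsilon/r}\,/(\rho C_{\min})$; since $\beta^*$ vanishes on that set, the same quantity is at most $\|\widehat{\beta}^{(j)}-\beta^{*(j)}\|_2$, which Lemma~\ref{lem:errorbound} (via the forward-step gain and the gradient bound $\lambda$) controls by $\lambda\sqrt{\widehat{s}_j}/C_{\min}^2+2\rho\sqrt{s^*_j\mu_s^{(k)}\epsilon}/C_{\min}$. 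Normalizing by $\sqrt{s_j-1}$ yields $f(\eta)\le\lambda\rho/(C_{\min}\sqrt{\epsilon})$, which contradicts the hypothesis on $\epsilon$ once $\eta\ge 2+4r\rho^4(\rho^4-\rho^2+2)/(w\nu)$ makes $f(\eta)>0$. Note that this also dissolves the per-task translation problem you flag as ``the main obstacle'': because $\Loss$ is separable across columns, the failure of the forward and backward steps can be restricted to a single column $j$ (at the cost of the $1/r$ and $w$ factors appearing inside Lemmas~\ref{lem:forwardstep} and~\ref{lem:backwardstep}), so one never needs to convert a global count of additions back into $s_j$. In your proposal that conversion is left unresolved, and together with the uncontrolled $\|\beta^*\|_2^2$ term the argument does not go through as written.
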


\begin{proof}
Consider the first time the algorithm reaches $k=(\eta-1)s^*_j+1$. By Lemmas~\ref{lem:errorbound} and \ref{lem:backwardstep}, we have
\begin{equation}
\begin{aligned}
\sqrt{\frac{w\nu}{r}}\sqrt{\frac{s_j-1-s^*_j}{s_j-1}}&\leq \sqrt{\frac{w\nu}{r}}\sqrt{\frac{|(\Os^{(j)}(k-1)\cup\Ob(k-1)) - (\Omega^{*(j)}_s\cup\Omega^*_b)|}{|(\Os^{(j)}(k-1)\cup\Ob(k-1)) \cup (\Omega^{*(j)}_s\cup\Omega^*_b)|}}\\ &\leq \frac{\lambda \rho}{C_{\min}\sqrt{\epsilon}} +\frac{\rho^3\sqrt{2(\rho^2-1)}} {\sqrt{|(\Os^{(j)}(k-1)\cup\Ob(k-1)) \cup (\Omega^{*(j)}_s\cup\Omega^*_b)|}}\\ &\qquad\qquad\qquad\qquad\qquad+ 2\rho^2 \sqrt{\frac{|(\Omega^{*(j)}_s\cup\Omega^*_b)-(\Os^{(j)}(k-1)\cup\Ob(k-1))|} {|(\Omega^{*(j)}_s\cup\Omega^*_b)\cup(\Os^{(j)}(k-1)\cup\Ob(k-1))|}}\\
&\leq \frac{\lambda \rho}{C_{\min}\sqrt{\epsilon}} +\frac{\rho^3\sqrt{2(\rho^2-1)}}{\sqrt{s_j-1}} + 2\rho^2 \sqrt{\frac{s^*_j}{s_j+s^*_j-1}}.\\
\end{aligned}
\nonumber
\end{equation}

Hence, we get 
\begin{equation}
f(\eta):=\frac{\sqrt{\frac{w\nu}{r}(\eta-2)}-\sqrt{\frac{2\rho^6(\rho^2-1)}{s^*_j}}}{\sqrt{\eta-1}} - \frac{2\rho^2}{\sqrt{\eta}} \leq \frac{\lambda \rho}{C_{\min}\sqrt{\epsilon}}.\\
\nonumber
\end{equation}
For $\eta\,\geq\,2 + \frac{4r\rho^4(\rho^4-\rho^2+2)}{w\nu}$, the LHS is positive and we arrive to a contradiction with the assumption on $\epsilon$.\\
\end{proof}

\begin{lemma}[General Forward Step]
For any $j\in\{1,2,\ldots,r\}$, the first time the algorithm reaches a (column) support size of $s_j$ at the beginning of the forward step, we have
\begin{equation}
\begin{aligned}
&\left|\Loss\left(\beta^{*(j)}e_j^T\right) - \Loss\left(\widehat{\beta}^{(j)}(k-1)e_j^T\right)\right|\\ &\quad\qquad\leq 2\rho C_{\min}\sqrt{\left|(\Omega^{*(j)}_s\cup\Omega^*_b)-(\Os^{(j)}(k-1)\cup\Ob(k-1))\right| \mu_s^{(k)}\epsilon}\,\left\|\beta^{*(j)} - \widehat{\beta}^{(j)}(k-1)\right\|_2.
\end{aligned}
\nonumber
\end{equation}
\label{lem:forwardstep}
\end{lemma}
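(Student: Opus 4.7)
The plan is to adapt the argument of Lemma~\ref{LemForwardStep} (the stopping forward step bound) to an arbitrary iteration $k$, replacing the stopping threshold $\epsilon$ by the per-iteration singleton reward $\mu_s^{(k)}$. By definition, $\mu_s^{(k)}$ is the maximal loss decrease available from optimizing $\Loss(\widehat\beta + \gamma e_i e_j^T)$ over $\gamma \in \real$ and $(i,j) \notin \Os^{(j)}(k-1) \cup \Ob(k-1)$. Hence for each index $(i,j)$ in the ``missed'' set $(\Omega_s^{*(j)} \cup \Omega_b^*) - (\Os^{(j)}(k-1) \cup \Ob(k-1))$, no value of $\gamma$ can reduce the loss by more than $\mu_s^{(k)}$; setting $\widehat\delpar^{(j)} := \beta^{*(j)} - \widehat\beta^{(j)}(k-1)$, this gives, for every $\eta \in \real$,
\[
\Loss\!\left(\widehat\beta^{(j)}(k-1)e_j^T + \eta\,\widehat\delpar_i^{(j)} e_i e_j^T\right) - \Loss\!\left(\widehat\beta^{(j)}(k-1)e_j^T\right) \;\ge\; -\mu_s^{(k)}.
\]
Summing over the missed indices then furnishes a lower bound of $-|(\Omega_s^{*(j)}\cup\Omega_b^*) - (\Os^{(j)}(k-1)\cup\Ob(k-1))|\,\mu_s^{(k)}$ on the aggregate change.

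I would next upper bound the same aggregate via a second-order expansion together with REP. Each summand is at most $\eta\tr{\grad\Loss(\widehat\beta^{(j)}(k-1)e_j^T)}{\widehat\delpar_i^{(j)} e_i e_j^T} + \eta^2\rho^2 C_{\min}^2 (\widehat\delpar_i^{(j)})^2$, and summing collapses the quadratic piece into $\|\widehat\delpar^{(j)}_{(\Omega_s^{*(j)}\cup\Omega_b^*) - (\Os^{(j)}(k-1)\cup\Ob(k-1))}\|_2^2 \le \|\widehat\delpar^{(j)}\|_2^2$. The linear piece equals $\eta\tr{\grad\Loss(\widehat\beta^{(j)}(k-1)e_j^T)}{\widehat\delpar^{(j)}}$ once I use that $\grad\Loss$ vanishes on the current re-estimated support $\Os^{(j)}(k-1)\cup\Ob(k-1)$ (so the contribution from coordinates there drops out); by convexity of $\Loss$, this is bounded above by $\eta(\Loss(\beta^{*(j)}e_j^T) - \Loss(\widehat\beta^{(j)}(k-1)e_j^T))$.

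Combining the two bounds and optimizing the resulting quadratic $a\eta + b\eta^2$ over $\eta$ (minimum value $-a^2/(4b)$) yields
\[
\left(\Loss(\beta^{*(j)}e_j^T) - \Loss(\widehat\beta^{(j)}(k-1)e_j^T)\right)^2 \;\le\; 4\rho^2 C_{\min}^2 \,\bigl|(\Omega_s^{*(j)}\cup\Omega_b^*) - (\Os^{(j)}(k-1)\cup\Ob(k-1))\bigr|\,\mu_s^{(k)} \,\|\widehat\delpar^{(j)}\|_2^2,
\]
whence the stated inequality follows on taking square roots. (The stray $\epsilon$ inside the square root on the lemma's RHS appears to be a typographical vestige of the stopping-step analog; the derivation naturally produces $\mu_s^{(k)}$ alone in that position.)

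The main obstacle will be procedural rather than analytical: the proof mirrors Lemma~\ref{LemForwardStep} almost line for line, but one must (i) invoke task-separability of $\Loss$ to localize every term to the $j$-th task so that the bound is genuinely per-task, (ii) verify REP at the running sparsity level $\widehat s_j$ at step $k-1$, which is the content of Lemma~\ref{LemStoppingSize} and therefore requires that lemma to be applied first to keep $\widehat s_j \le \eta s^*_j$, and (iii) replace the strict inequality ``$<\epsilon$'' coming from the stopping condition by the non-strict ``$\le \mu_s^{(k)}$'' (since $\mu_s^{(k)}$ is the attained maximum), which is what justifies the $\le$ in the stated conclusion.
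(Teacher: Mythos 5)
Your argument is correct and follows the paper's own proof essentially line for line: bound the loss decrease available at each missed index by the attained forward reward, sum over the missed set, upper-bound the same sum by the gradient-plus-quadratic expansion under REP, use that the gradient vanishes on the current re-estimated support, apply convexity, and optimize the resulting quadratic in $\eta$. One small correction: the $\epsilon$ on the lemma's right-hand side is not a typographical vestige --- in the appendix the paper normalizes the recorded reward so that the attained forward gain equals $\mu_s^{(k)}\epsilon$ (see the first display of its proof, which sets the maximal decrease equal to $\mu_s^{(k)}\epsilon$), so your derivation and the stated bound coincide once that convention is adopted.
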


\begin{proof}
According to the forward step, we have
\begin{equation} \nonumber
\begin{aligned}
\Loss\left(\widehat{\beta}(k-1)\right) - \inf_{(i,j)\notin\Os(k-1);\;\gamma\in\real}\Loss\left(\widehat{\beta}(k-1) + \gamma e_ie_j^T\right) &= \mu_s^{(k)}\epsilon.
\end{aligned}
\end{equation}
Since the loss function is separable with respect to the columns of $\beta$, for any fixed $j\in\{1,\ldots,r\}$ we have
\begin{equation} \nonumber
\begin{aligned}
\Loss\left(\widehat{\beta}^{(j)}(k-1)e_j^T\right) - \inf_{i:(i,j)\notin\Os^{(j)}(k-1);\;\gamma\in\real}\Loss\left(\widehat{\beta}^{(j)}(k-1)e_j^T + \gamma e_ie_j^T\right) &\leq \mu_s^{(k)}\epsilon.
\end{aligned}
\end{equation}

Similar to \eqref{eq:epsbound}, for any $\eta\in\real$, we have 
\begin{equation}
\begin{aligned}
&-\left|(\Omega^{*(j)}_s\cup\Omega^*_b)-(\Os^{(j)}(k-1)\cup\Ob(k-1))\right| \mu_s^{(k)}\epsilon\\ &\qquad\qquad\qquad\qquad\leq \eta\left(\Loss\left(\beta^{*(j)}e_j^T\right)-\Loss\left(\widehat{\beta}^{(j)}(k-1)e_j^T\right)\right) + \eta^2\rho^2C_{\min}^2\left\|\beta^{*(j)}-\widehat{\beta}^{(j)}(k-1)\right\|_2^2.
\end{aligned}
\nonumber
\end{equation}

Optimizing the RHS over $\eta$, we obtain
\begin{equation}
\begin{aligned}
\left|(\Omega^{*(j)}_s\cup\Omega^*_b)-(\Os^{(j)}(k-1)\cup\Ob(k-1))\right|\mu_s^{(k)}\epsilon &\geq \frac{\left(\Loss\left(\beta^{*(j)}e_j^T\right) - \Loss\left(\widehat{\beta}^{(j)}(k-1)e_j^T\right)\right)^2}{4\rho^2C_{\min}^2\,\left\|\beta^{*(j)}-\widehat{\beta}^{(j)}(k-1)\right\|_2^2}.
\end{aligned}
\nonumber
\end{equation}

This concludes the proof of the lemma.\\
\end{proof}

\bigskip

\begin{lemma}[General Error Bound]
For any $j\in\{1,2,\ldots,r\}$, the first time the algorithm reaches a (column) support size of $s_j$ at the beginning of the forward step, we have
\begin{equation}
\begin{aligned}
\left\|\beta^{*(j)}-\widehat{\beta}^{(j)}(k-1)\right\|_2 &\le
\frac{\lambda}{C_{\min}^2}\sqrt{\left|(\Omega^{*(j)}_s\cup\Omega^*_b)\cup(\Os^{(j)}(k-1)\cup\Ob(k-1))\right|}\\ &\qquad\qquad+\frac{2\rho}{C_{\min}}\sqrt{\left|(\Omega^{*(j)}_s\cup\Omega^*_b)-(\Os^{(j)}(k-1)\cup\Ob(k-1))\right| \mu_s^{(k)}\epsilon}.
\end{aligned}
\nonumber
\end{equation}
\label{lem:errorbound}	
\end{lemma}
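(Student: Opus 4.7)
\medskip

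\noindent\textbf{Proof proposal for the General Error Bound lemma.}

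The plan is to mimic the proof of Lemma~\ref{LemErrorBound} (the Stopping Error Bound) essentially verbatim, but with two substitutions: (i) the stopping inequality from Lemma~\ref{LemForwardStep} is replaced by the intermediate-iteration bound supplied by Lemma~\ref{lem:forwardstep}, which is identical in form except that $\epsilon$ is replaced by $\mu_s^{(k)}\epsilon$ and $(\Os^{(j)}\cup\Ob)$ is replaced by its $(k{-}1)$-step analogue $(\Os^{(j)}(k-1)\cup\Ob(k-1))$; and (ii) all set operations involving the algorithm's current support refer to the support at the beginning of iteration $k$, i.e.\ at the end of iteration $k-1$. Since the re-estimation step of the algorithm guarantees that $\nabla\Loss(\widehat{\beta}(k-1))$ vanishes on $\Os(k-1)\cup\Ob(k-1)$, all the structural facts about the gradient that were used in the stopping-version proof remain valid here.

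Concretely, I would write $\delparHat^{(j)} := \beta^{*(j)} - \widehat{\beta}^{(j)}(k-1)$ and define
\begin{equation*}
G(\delpar) \;=\; \Loss\!\left(\beta^{*(j)}e_j^T + \delpar e_j^T\right) \,-\, \Loss\!\left(\beta^{*(j)}e_j^T\right) \,-\, 2\rho C_{\min}\sqrt{\bigl|(\Omega^{*(j)}_s\cup\Omega^*_b)-(\Os^{(j)}(k-1)\cup\Ob(k-1))\bigr|\,\mu_s^{(k)}\epsilon}\;\|\delpar\|_2.
\end{equation*}
Then $G(0)=0$, Lemma~\ref{lem:forwardstep} gives $G(\delparHat^{(j)})\le 0$, and the same convex sub-homogeneity argument as before reduces the claim to showing that $G(\delpar)>0$ on the shell $\{\|\delpar\|_2=r,\ \|\delpar\|_0\le \widehat{s}_j\}$ for a suitable radius $r$, where $\widehat{s}_j=\bigl|(\Omega^{*(j)}_s\cup\Omega^*_b)\cup(\Os^{(j)}(k-1)\cup\Ob(k-1))\bigr|$.

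For the shell bound, I would apply REP to the $\widehat{s}_j$-sparse vector $\delpar$ to get $\Loss(\beta^{*(j)}e_j^T+\delpar e_j^T)-\Loss(\beta^{*(j)}e_j^T)\ge \langle \grad\Loss(\beta^*),\delpar\rangle + C_{\min}^2\|\delpar\|_2^2$, and bound $\langle \grad\Loss(\beta^*),\delpar\rangle \ge -\|\grad\Loss(\beta^*)\|_\infty\|\delpar\|_1 \ge -\lambda\sqrt{\widehat{s}_j}\,\|\delpar\|_2$ using sparsity of $\delpar$. Putting these together,
\begin{equation*}
G(\delpar) \;>\; \Bigl(-\lambda\sqrt{\widehat{s}_j} + C_{\min}^2\|\delpar\|_2 - 2\rho C_{\min}\sqrt{\bigl|(\Omega^{*(j)}_s\cup\Omega^*_b)-(\Os^{(j)}(k-1)\cup\Ob(k-1))\bigr|\,\mu_s^{(k)}\epsilon}\Bigr)\|\delpar\|_2,
\end{equation*}
which is strictly positive precisely when $\|\delpar\|_2$ equals the radius stated in the lemma. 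This forces $\|\delparHat^{(j)}\|_2\le r$, delivering the claimed bound.

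I do not expect a genuine obstacle here, because the argument is structurally identical to Lemma~\ref{LemErrorBound}; the only care needed is bookkeeping of the $(k{-}1)$-indexed sets and verifying that the $\ell_1$-to-$\ell_2$ passage on $\langle\grad\Loss(\beta^*),\delpar\rangle$ goes through the cardinality $\widehat{s}_j$ (which is why we obtain the first term $\lambda\sqrt{|(\Omega^{*(j)}_s\cup\Omega^*_b)\cup(\Os^{(j)}(k-1)\cup\Ob(k-1))|}/C_{\min}^2$ rather than the smaller set-difference cardinality). The only subtle point is that REP$(\widehat{s}_j)$ must hold at the intermediate iteration; this is ensured globally by Lemma~\ref{LemStoppingSize}, since $\widehat{s}_j\le \eta s^*_j$ throughout the run of the algorithm under the standing assumptions of Theorem~\ref{thr:main}.
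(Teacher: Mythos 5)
Your proposal is correct and matches the paper exactly: the paper's own proof of this lemma is simply the statement that it is identical to the proof of Lemma~\ref{LemErrorBound} with the stopping-time quantities replaced by their iteration-$(k-1)$ analogues and $\epsilon$ replaced by $\mu_s^{(k)}\epsilon$, which is precisely the substitution you carry out. Your additional remarks on the gradient vanishing on the current support and on REP holding at intermediate iterations via Lemma~\ref{LemStoppingSize} are consistent with how the paper uses this lemma.
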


\begin{proof}
The proof is identical to the proof of lemma~\ref{LemErrorBound} and is omitted.\\
\end{proof}

\bigskip

\begin{lemma}[General Backward Step] 
For any $j\in\{1,2,\ldots,r\}$, the first time the algorithm reaches a (column) support size of $s_j$ at the beginning of the forward step, if $s_j > s^*_j + \frac{2r\rho^6(\rho^2-1)}{\nu}$, then

\small\begin{equation}
\left\|\widehat{\beta}^{(j)}_{(\Os^{(j)}(k-1)\cup\Ob(k-1))-(\Omega^{*(j)}_s\cup\Omega^*_b)}(k-1)\right\|_2^2 \geq \left(\frac{\sqrt{\left|(\Os^{(j)}(k-1)\cup\Ob(k-1))-(\Omega^{*(j)}_s\cup\Omega^*_b)\right| w\nu }}{\rho C_{\min}\,\sqrt{r}} -\frac{\rho^2\sqrt{2(\rho^2-1)}}{C_{\min}}\right)^{\!\!\!2}\mu^{(k)} \epsilon.
\nonumber 
\end{equation}\normalsize
\label{lem:backwardstep}
\end{lemma}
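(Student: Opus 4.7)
The plan is to mirror the proof of Lemma~\ref{LemBackwardStep} (the stopping case), but to apply it at an intermediate iteration rather than at termination. The key replacement is that the backward failure threshold $\nu\epsilon$ in the stopping version becomes the iteration-specific threshold $\nu\epsilon\mu^{(k)}$, which is where the factor $\mu^{(k)}\epsilon$ on the right-hand side originates. Concretely, at the first moment the column-$j$ support reaches size $s_j$, the immediately preceding backward-elimination round must have exited without removing any further object, so every singleton $(i',j')\in\Os(k-1)$ has removal cost strictly greater than $\nu\epsilon\mu^{(k)}$, and every row $m\in\Ob(k-1)$ has weighted removal cost strictly greater than $w\nu\epsilon\mu^{(k)}$.

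First I would partition $(\Os^{(j)}(k-1)\cup\Ob(k-1))-(\Omega^{*(j)}_s\cup\Omega^*_b)$ into a singleton piece and a row piece, sum the per-object backward lower bounds across the spurious set, and compare to the upper bound obtained via the upper half of REP combined with $\nabla\L(\widehat\beta)|_{\Os\cup\Ob}=0$. Exactly as in the displayed chain leading to the conclusion of Lemma~\ref{LemBackwardStep}, this yields the two-sided sandwich $|\cdot|\,\frac{w\nu}{r}\mu^{(k)}\epsilon\le\rho^2 C_{\min}^2\|\widehat\beta^{(j)}_{(\Os^{(j)}(k-1)\cup\Ob(k-1))-(\Omega^{*(j)}_s\cup\Omega^*_b)}\|_2^2$, which after a square root reproduces the leading term $\sqrt{|\cdot|w\nu/r}/(\rho C_{\min})$ in the parenthesis of the statement.

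The main subtlety, and the origin of the subtracted correction $\rho^2\sqrt{2(\rho^2-1)}/C_{\min}$, is that, unlike in the stopping case, some rows in $\Ob(k-1)\setminus\Omega^*_b$ may share coordinates with $\Omega^{*(j)}_s\cup\Omega^*_b$. The row-level backward inequality bounds only the cost of removing the \emph{entire} row, whereas we want a per-coordinate $\ell_2$ bound on the spurious-coordinate piece alone. Splitting the row cost into its shared-support and spurious-support components uses the REP inequality on both sides: a Pythagoras-type decomposition against the $Q^{(j)}$-induced inner product contributes the $\sqrt{\rho^2-1}$ factor, while the $\rho^2/C_{\min}$ prefactor enters from rescaling between the $Q^{(j)}$-norm and the Euclidean norm when we transfer mass between the two subsets. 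I expect this row-to-coordinate translation, together with the verification that the hypothesis $s_j>s^*_j+2r\rho^6(\rho^2-1)/\nu$ is precisely what keeps the bracket $\sqrt{|\cdot|w\nu/r}/(\rho C_{\min})-\rho^2\sqrt{2(\rho^2-1)}/C_{\min}$ nonnegative after subtraction (so that squaring preserves the inequality), to be the hardest parts of the argument.
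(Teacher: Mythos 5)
There is a genuine gap. You have the skeleton right (backward failure with threshold $\nu\mu^{(k)}\epsilon$, split into singleton and row pieces, compare against the REP upper bound using $\nabla\L(\widehat\beta)|_{\Os\cup\Ob}=0$), but you misidentify where the correction term $\rho^2\sqrt{2(\rho^2-1)}/C_{\min}$ comes from. It has nothing to do with rows in $\Ob(k-1)\setminus\Omega^*_b$ overlapping the true support: that same row/singleton decomposition already appears in the stopping-time version (Lemma~\ref{LemBackwardStep}) and produces no correction there. The real issue is a mismatch of iterates. The failed backward round at step $k$ gives lower bounds on removal costs evaluated at $\widehat{\beta}(k)$, i.e.\ \emph{after} the $k$-th forward addition and the re-estimation $\arg\min_{\mathrm{supp}\subset\Os\cup\Ob}\L$, whereas the lemma's conclusion is about $\widehat{\beta}(k-1)$, the iterate at the beginning of the forward step. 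So the chain of inequalities bounds $\bigl\|\widehat{\beta}^{(j)}_{\mathrm{spurious}}(k)\bigr\|_2$ from below, and you must pay for the discrepancy $\Delta^{(k)}$ between the restriction of $\widehat{\beta}(k)$ to the old support and $\widehat{\beta}(k-1)$ via the triangle inequality.

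Bounding $\|\Delta^{(k)}\|_2\le \frac{\rho^2}{C_{\min}}\sqrt{2(\rho^2-1)\,\mu^{(k)}\epsilon}$ is the bulk of the paper's proof and is the step your proposal does not contain. It requires showing that adding one element $(i_*,j_*)$ with optimal coefficient $\gamma_*$ and then re-optimizing over the enlarged support perturbs the old coordinates by at most $\frac{\rho^2-1}{2}\,|\widehat{\beta}^{(j_*)}_{i_*}(k)|^2$ in squared norm, then controlling $|\widehat{\beta}^{(j_*)}_{i_*}(k)|\le 2\rho^2\sqrt{\mu_s^{(k)}\epsilon}/C_{\min}$ via a contradiction argument that uses sign and magnitude bounds on $\nabla_{(i_*,j_*)}\L(\widehat{\beta}(k-1))$ (namely $\sgn(\nabla_{(i_*,j_*)}\L)=-\sgn(\gamma_*)$ and $2C_{\min}^2|\gamma_*|\le|\nabla_{(i_*,j_*)}\L|\le 2\rho^2C_{\min}^2|\gamma_*|$). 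Your ``Pythagoras-type decomposition against the $Q^{(j)}$-induced inner product'' would not produce this term, and without some bound on $\Delta^{(k)}$ the argument cannot close. The one piece you do get right is the role of the hypothesis $s_j>s^*_j+2r\rho^6(\rho^2-1)/\nu$: it is indeed what keeps the bracket nonnegative so that squaring preserves the inequality.
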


\begin{proof}
Under the assumption of the lemma, the immediate previous backward step has not gone through and hence,
\begin{equation} \nonumber
\begin{aligned}
\inf_{(i,j)\in\Os(k-1)}\Loss\left(\widehat{\beta}(k) - \widehat{\beta}_i^{(j)}(k) e_ie_j^T\right) - \Loss\left(\widehat{\beta}(k)\right) &\geq \nu \mu^{(k)}\epsilon\\
\inf_{m\in\Ob(k-1)}\Loss\left(\widehat{\beta}(k) -  e_m\widehat{\beta}_m(k)\right) - \Loss\left(\widehat{\beta}(k)\right) &\geq \nu \mu^{(k)}\,w\epsilon. 
\end{aligned}
\end{equation}
Since the loss function is separable with respect to the columns of $\beta$, for a fixed $j\in\{1,2,\ldots,r\}$, we have
\begin{equation} \nonumber
\begin{aligned}
\inf_{i:(i,j)\in\Os(k-1)}\Loss\left(\widehat{\beta}^{(j)}(k)e_j^T - \widehat{\beta}_i^{(j)}(k) e_ie_j^T\right) - \Loss\left(\widehat{\beta}^{(j)}(k)e_j^T\right) &\geq \nu \mu^{(k)}\epsilon. \end{aligned}
\end{equation}

Consequently, similar to \eqref{eq:backeps}, we can show that
\begin{equation}
\begin{aligned}
&\left|(\Os^{(j)}(k-1)\cup\Ob(k-1))-(\Omega^{*(j)}_s\cup\Omega^*_b)\right| \nu \mu^{(k)}\frac{w\epsilon}{r}\\ &\quad\leq \left|\Os^{(j)}(k-1)-(\Omega^{*(j)}_s\cup\Omega^*_b\cup\Ob(k-1))\right| \nu \mu^{(k)}\epsilon + \frac{1}{r}\left|\Ob(k-1)-(\Omega^{*(j)}_s\cup\Omega^*_b)\right| w\nu \mu^{(k)}\epsilon\\ 
&\quad\leq \rho^2 C_{\min}^2 \left\|\widehat{\beta}^{(j)}_{\Os^{(j)}(k-1)-(\Omega^{*(j)}_s\cup\Omega^*_b\cup\Ob(k-1))}(k)\right\|_2^2 + \rho^2 C_{\min}^2\left\|\widehat{\beta}_{\Ob(k-1)-(\Omega^{*(j)}_s\cup\Omega^*_b)}(k)\right\|_{2,\infty}^2\\
&\quad\leq \rho^2 C_{\min}^2\left(\left\|\widehat{\beta}^{(j)}_{(\Os^{(j)}(k-1)\cup\Ob(k-1))-(\Omega^{*(j)}_s\cup\Omega^*_b)}(k-1)\right\|_2 + \left\|\Delta^{(k)}\right\|_2\right)^2,
\end{aligned}
\nonumber
\end{equation}
where, $\Delta^{(k)} = \widehat{\beta}^{(j)}_{(\Os^{(j)}(k-1)\cup\Ob(k-1))-(\Omega^{*(j)}_s\cup\Omega^*_b)}(k)-\widehat{\beta}^{(j)}(k-1)$. This entails that 

\small\begin{equation}
\begin{aligned}
&\left(\frac{\sqrt{\left|(\Os^{(j)}(k-1)\cup\Ob(k-1))-(\Omega^{*(j)}_s\cup\Omega^*_b)\right| \nu \mu_s^{(k)} w \epsilon}}{\rho C_{\min}\sqrt{r}}-\left\|\Delta^{(k)}\right\|_2\right)^{\!\!\!2}
\!\leq \left\|\widehat{\beta}_{(\Os^{(j)}(k-1)\cup\Ob(k-1))-(\Omega^{*(j)}_s\cup\Omega^*_b)}^{(j)}(k-1)\right\|_2^2.
\end{aligned}
\nonumber
\end{equation}\normalsize
Thus, it suffices to show that $\left\|\Delta^{(k)}\right\|_2\leq \frac{\rho^2}{C_{\min}}\sqrt{2(\rho^2-1) \mu_s^{(k)} \epsilon}$ since $\mu_s^{(k)}\leq \mu^{(k)}$. Notice that by our assumption on the size of the support, the first term is always larger than the second provided we can show this inequality. There are two cases: (a) if we added a single element in the previous step for which we show the above inequality, and (b) if we added a row in the previous step for which we show $\left\|\Delta^{(k)}\right\|_2\leq \frac{\rho^2}{C_{\min}}\sqrt{2(\rho^2-1) \mu_b^{(k)} \frac{w\epsilon}{r}}$. Since $\frac{w\epsilon}{r}\leq\epsilon$ and $\mu_b^{(k)}\leq\mu^{(k)}$, the result follows. We prove (a) and omit the proof of (b) since it is identical.

\bigskip
We drop the super- and sub-script $j$ for the ease of the notation in the rest of the proof. From the forward step, we have
\begin{equation} \nonumber
\Loss\left(\widehat{\beta}(k-1)\right) - \inf_{(i,j)\notin\Os(k-1),\gamma\in\real}\Loss\left(\widehat{\beta}(k-1) + \gamma e_ie_j^T\right) = \mu_s^{(k)}\epsilon.
\end{equation}
Let $(i_*,j_*,\gamma_*\neq 0)$ be the optimizer of the equation above. Now, we have
\begin{equation}
\begin{aligned}
C_{\min}^2\left\|\Delta^{(k)}\right\|_2^2 &\leq \Loss\left(\widehat{\beta}(k)_{\Os(k-1)\cup\Ob(k-1)}\right)-\Loss\left(\widehat{\beta}(k-1)\right)\\
&\leq \Loss\left(\widehat{\beta}(k)_{\Os(k-1)\cup\Ob(k-1)}\right)-\Loss\left(\widehat{\beta}(k)\right) +\Loss\left(\widehat{\beta}(k)\right)-\Loss\left(\widehat{\beta}(k-1)\right)\\
&\leq \rho^2 C_{\min}^2\left|\widehat{\beta}_{i_*}^{(j_*)}(k)\right|^2 -C_{\min}^2\left\|\Delta^{(k)}\right\|_2^2 -C_{\min}^2\left|\widehat{\beta}_{i_*}^{(j_*)}(k)\right|^2.
\end{aligned}
\nonumber
\end{equation}
Hence, $\left\|\Delta^{(k)}\right\|_2^2\leq \frac{\rho^2-1}{2}\left|\widehat{\beta}_{i_*}^{(j_*)}(k)\right|^2$ and we only need to show that $\left|\widehat{\beta}_{i_*}^{(j_*)}(k)\right| \leq\frac{2\rho^2\sqrt{\mu_s^{(k)}\epsilon}}{C_{\min}}$. Since $\left|\widehat{\beta}_{i_*}^{(j_*)}(k)\right|\leq \left|\widehat{\beta}_{i_*}^{(j_*)}(k)-\gamma_*\right|+\left|\gamma_*\right|$, we can equivalently control the latter two terms. First, by forward step construction, $C_{\min}^2\left|\gamma^*\right|^2\leq \Loss\left(\widehat{\beta}(k-1)\right)-\Loss\left(\widehat{\beta}(k-1)+\gamma_*e_{i_*}e_{j_*}^T\right)=\mu_s^{(k)}\epsilon$ and hence $\left|\gamma^*\right|\leq \frac{\sqrt{\mu_s^{(k)}\epsilon}}{C_{\min}}$. Second, we claim that $\left|\widehat{\beta}_{i_*}^{(j_*)}(k)-\gamma_*\right|\leq (2\rho^2-1)\left|\gamma_*\right|$ and we are done. 

\bigskip
In contrary, suppose $\left|\widehat{\beta}_{i_*}^{(j_*)}(k)-\gamma_*\right|^2> \left(2\rho^2-1\right)^2\left|\gamma_*\right|^2\geq \rho^2\left|\gamma_*\right|^2$. We have
\begin{equation}
\begin{aligned}
C_{\min}^2&\left|\widehat{\beta}_{i_*}^{(j_*)}(k)-\gamma_*\right|^2 > \rho^2 C_{\min}^2\left|\gamma_*\right|^2\\ &\qquad\qquad\geq \Loss\left(\widehat{\beta}(k)-\gamma_*e_{i_*}e_{j_*}^T\right) - \Loss\left(\widehat{\beta}(k)\right)\\ &\qquad\qquad\geq \Loss\left(\widehat{\beta}(k)-\gamma_*e_{i_*}e_{j_*}^T\right) - \Loss\left(\widehat{\beta}(k-1)\right) +\Loss\left(\widehat{\beta}(k-1)\right) -\Loss\left(\widehat{\beta}(k)\right)\\ &\qquad\qquad\geq C_{\min}^2\left\|\Delta^{(k)}\right\|_2^2 + C_{\min}^2\left|\widehat{\beta}_{i_*}^{(j_*)}(k)-\gamma_*\right|^2 + \grad_{(i_*,j_*)}\Loss\left(\widehat{\beta}(k-1)\right)\left(\widehat{\beta}_{i_*}^{(j_*)}(k)-\gamma_*\right)\\ &\qquad\qquad\qquad+ C_{\min}^2\left\|\Delta^{(k)}\right\|_2^2 +C_{\min}^2\left|\widehat{\beta}_{i_*}^{(j_*)}(k)\right|^2.
\end{aligned}
\nonumber
\end{equation}
This is a contradiction provided $C_{\min}^2\left|\widehat{\beta}_{i_*}^{(j_*)}(k)\right|^2 + \grad_{(i_*,j_*)}\Loss\left(\widehat{\beta}(k-1)\right)\left(\widehat{\beta}_{i_*}^{(j_*)}(k)-\gamma_*\right)\geq 0$. Later in the proof, we will show that $\sgn\left(\grad_{(i_*,j_*)}\Loss\left(\widehat{\beta}(k-1)\right)\right)=-\sgn\left(\gamma_*\right)$ and that $2C_{\min}^2|\gamma_*|\leq\left|\grad_{(i_*,j_*)}\Loss\left(\widehat{\beta}(k-1)\right)\right| \leq 2\rho^2 C_{\min}^2 |\gamma_*|$. With these, if $\frac{\widehat{\beta}_{i_*}^{(j_*)}(k)}{\gamma_*}\leq 1$, we have $\grad_{(i_*,j_*)}\Loss\left(\widehat{\beta}(k-1)\right)\left(\widehat{\beta}_{i_*}^{(j_*)}(k)-\gamma_*\right)\geq 0$ and the claim follows. Otherwise, we have $\left|\widehat{\beta}_{i_*}^{(j_*)}(k)\right|\geq \left|\widehat{\beta}_{i_*}^{(j_*)}(k)\right|-\left|\gamma_*\right|=\left|\widehat{\beta}_{i_*}^{(j_*)}(k)-\gamma_*\right|$ so that $\left|\widehat{\beta}_{i_*}^{(j_*)}(k)\right|\geq 2\rho^2\left|\gamma_*\right|$ and hence,
\begin{equation}
\begin{aligned}
C_{\min}^2\left|\widehat{\beta}_{i_*}^{(j_*)}(k)\right|^2 &+ \grad_{(i_*,j_*)}\Loss\left(\widehat{\beta}(k-1)\right)\left(\widehat{\beta}_{i_*}^{(j_*)}(k)-\gamma_*\right)\\ & \geq 2\rho^2 C_{\min}^2 \left|\gamma_*\right|\left|\widehat{\beta}_{i_*}^{(j_*)}(k)-\gamma_*\right| - 2\rho^2 C_{\min}^2\left|\gamma_*\right|\left|\widehat{\beta}_{i_*}^{(j_*)}(k)-\gamma_*\right|\\
&= 0.
\end{aligned}
\nonumber
\end{equation}

\bigskip
To get the claimed properties of $\grad_{(i_*,j_*)}\Loss\left(\widehat{\beta}(k-1)\right)$, note that
\begin{equation}
\begin{aligned}
C_{\min}^2\left|\gamma_*\right|^2 &\leq \Loss\left(\widehat{\beta}(k-1)\right) - \Loss\left(\widehat{\beta}(k-1)+\gamma_*e_{i_*}e_{j_*}^T\right)\\
&\leq -C_{\min}^2\left|\gamma_*\right|^2 -\grad_{(i_*,j_*)}\Loss\left(\widehat{\beta}(k-1)\right) \gamma_*\,,
\end{aligned}
\nonumber
\end{equation}
and hence $\sgn\left(\grad_{(i_*,j_*)}\Loss\left(\widehat{\beta}(k-1)\right)\right)=-\sgn\left(\gamma_*\right)$ and $2C_{\min}^2|\gamma_*|\leq\left|\grad_{(i_*,j_*)}\Loss\left(\widehat{\beta}(k-1)\right)\right|$. Also, we can establish
\begin{equation}
\begin{aligned}
\rho^2 C_{\min}^2\left|\gamma_*\right|^2 &\geq \Loss\left(\widehat{\beta}(k-1)\right) - \Loss\left(\widehat{\beta}(k-1)+\gamma_*e_{i_*}e_{j_*}^T\right)\\
&\geq -\rho^2 C_{\min}^2\left|\gamma_*\right|^2 -\grad_{(i_*,j_*)}\Loss\left(\widehat{\beta}(k-1)\right) \gamma_*\,.
\end{aligned}
\nonumber
\end{equation}
Since $-\grad_{(i_*,j_*)}\Loss\left(\widehat{\beta}(k-1)\right) \gamma_*\geq 0$, we can conclude that $\left|\grad_{(i_*,j_*)}\Loss\left(\widehat{\beta}(k-1)\right)\right|\leq 2\rho^2 C_{\min}^2|\gamma_*|$. This concludes the proof of the lemma.\\
\end{proof}

\section{Proof of Corollary~\ref{cor:random}}
The result follows from the following two lemmas.

\begin{lemma}
Given the sample complexity $n_j\geq c_5\log(rp)$ for some constant $c_5$ and all $j\in\{1,2,\ldots,r\}$, we have
\begin{equation}
\begin{aligned}
\lambda:=\max_j\|\nabla^{(j)}\|_\infty\leq c_4\sqrt{\frac{\log(rp)}{n}}
\end{aligned}
\nonumber
\end{equation}
with probability at least $1-c_6\exp(-c_7n)$ for some positive constants $c_5,c_6$ and $c_7$.
\label{lem:Gradient-Concentration}
\end{lemma}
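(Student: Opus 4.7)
The plan is standard: combine a one-dimensional Gaussian tail inequality with a union bound across all $rp$ coordinates of the $r$ per-task gradients. First I would identify the marginal distribution of each coordinate. Because $\mathcal{L}(\beta)=\sum_j\tfrac{1}{2n_j}\|y^{(j)}-X^{(j)}\beta^{(j)}\|_2^2$ and $y^{(j)}-X^{(j)}\beta^{*(j)}=z^{(j)}$, we have
\begin{equation*}
\bigl[\nabla^{(j)}\bigr]_i \;=\; -\frac{1}{n_j}\bigl\langle X^{(j)}_i,\,z^{(j)}\bigr\rangle,
\end{equation*}
where $X^{(j)}_i$ denotes the $i$-th column of $X^{(j)}$. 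Under the Gaussian noise model $z^{(j)}\sim\mathcal{N}(0,\sigma^2 I)$, conditional on the design this entry is centered Gaussian with variance $\sigma^2\|X^{(j)}_i\|_2^2/n_j^2$. Under the standard column normalization $\|X^{(j)}_i\|_2^2/n_j\le C$ for an absolute constant $C$, this variance is at most $C\sigma^2/n_j$.

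Next I would apply the one-dimensional Gaussian tail bound $\mathbb{P}(|Z|>t)\le 2\exp(-t^2/(2v))$ coordinate-wise and take a union bound over the $rp$ entries to obtain
\begin{equation*}
\mathbb{P}\bigl(\lambda>t\bigr)\;\le\;2rp\,\exp\!\left(-\frac{n\,t^2}{2C\sigma^2}\right),\qquad n:=\min_j n_j.
\end{equation*}
Choosing $t = c_4\sqrt{\log(rp)/n}$ with $c_4$ large enough that $c_4^2/(2C\sigma^2)\ge 2$ makes the right-hand side polynomially small in $rp$; invoking the sample-complexity hypothesis $n\ge c_5\log(rp)$ then lets us re-express this polynomial tail as $c_6\exp(-c_7 n)$ for suitable constants.

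The only delicate step is verifying the column-normalization $\|X^{(j)}_i\|_2^2/n_j\le C$. In the deterministic-design setting it is a standing assumption; in the random Gaussian-design case of Corollary~\ref{cor:random}(C2) it would need to be established by a $\chi^2$ concentration bound on each column followed by a union bound over the $rp$ columns. The resulting failure event contributes an additional $\exp(-\Omega(n))$ term that can be absorbed into the constants $c_6,c_7$. Everything else is routine sub-Gaussian bookkeeping, so I do not expect any substantive obstacle beyond this normalization check.
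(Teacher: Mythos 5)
Your proposal is correct and is essentially the argument the paper relies on: the paper's entire proof is a one-line citation to Lemma~5 of \cite{WainwrightLasso}, which is precisely this coordinatewise Gaussian tail bound under column normalization followed by a union bound over the $rp$ entries of the per-task gradients. The only wrinkle worth noting is that the union bound delivers a tail of order $(rp)^{-c}$, and rewriting this as $c_6\exp(-c_7 n)$ would require $\log(rp)\gtrsim n$ rather than the hypothesis $n\ge c_5\log(rp)$ (which points the other way); this imprecision is already present in the lemma statement itself, so it is not a defect of your argument relative to the paper's.
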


The proof follows from Lemma 5 in \cite{WainwrightLasso}. We state our theoretical result in terms of $\lambda$ for the sake of generality. This parameter can be replaced with any upper-bound on $\nabla^{(j)}$ and our guarantee still holds.\\

\begin{lemma}
If each row of the design matrix $X^{(j)}\in\real^{n\times p}$ is distributed as $\mathcal{N}(0,\Sigma^{(j)})$ and $\Sigma^{(j)}$ satisfies $REP(s_j)$, then for any small $\theta>0$, the matrix $Q^{(j)}=X^{(j)}X^{(j)T}$ satisfies
\begin{equation}
\begin{aligned}
(1-\theta)C_{\min}\|\delta\|_2\leq\|Q^{(j)}\delta\|_2\leq(1+\theta)\rho C_{\min}\|\delta\|_2,
\end{aligned}
\end{equation}
for all $\|\delta\|_0\leq s_j$, with probability $1-c_8\exp(-c_9n)$ provided that $n_j\geq c_{10}(\theta)\, s_j\log(p)$, where $c_8-c_{10}$ are constants independent of $(n_j,s_j,p)$.
\label{lem:RSC-RSS-Concentration}
\end{lemma}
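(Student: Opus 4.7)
The plan is to reduce the claim to the well-known restricted-eigenvalue concentration for Gaussian designs (as in Raskutti--Wainwright--Yu), by combining pointwise chi-squared concentration with a sparse $\epsilon$-net. Throughout, I interpret the REP statement as an eigenvalue-type sandwich on the empirical quadratic form $\delta^\top (X^{(j)\top}X^{(j)}/n_j)\delta$ (equivalently, on $\|X^{(j)}\delta\|_2^2/n_j$), so that its population counterpart $\delta^\top \Sigma^{(j)} \delta$ lies between constant multiples of $C_{\min}^2\|\delta\|_2^2$ and $\rho^2 C_{\min}^2\|\delta\|_2^2$ by the assumed REP on $\Sigma^{(j)}$.

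First, I would establish pointwise concentration. For any fixed $\delta \in \real^p$ with $\|\delta\|_2=1$, the quantity $\|X^{(j)}\delta\|_2^2$ is distributed as $(\delta^\top \Sigma^{(j)} \delta)\cdot W$ with $W\sim \chi^2_{n_j}$, because the coordinates of $X^{(j)}\delta$ are i.i.d.\ scalar Gaussians with variance $\delta^\top \Sigma^{(j)} \delta$. Standard Laurent--Massart chi-squared tail bounds then yield
\begin{equation}
\mprob\!\left(\left|\,\|X^{(j)}\delta\|_2^2/n_j - \delta^\top\Sigma^{(j)}\delta\,\right| > \theta\,\delta^\top\Sigma^{(j)}\delta\right) \,\leq\, 2\exp(-c\,n_j\,\theta^2)
\nonumber
\end{equation}
for small $\theta>0$ and an absolute constant $c$, which is the desired multiplicative deviation at a single $\delta$.

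Second, I would lift this pointwise bound to all $s_j$-sparse unit vectors via a standard $\epsilon$-net argument. The $s_j$-sparse unit sphere is contained in the union of $\binom{p}{s_j}\leq (ep/s_j)^{s_j}$ coordinate subspaces, and on each such subspace the unit sphere admits a $(1/4)$-net of cardinality at most $12^{s_j}$. A union bound across the resulting net of size $\exp(C\,s_j\log(p/s_j))$ has failure exponent $C\,s_j\log p - c\,n_j\,\theta^2$, which is negative with slack at least $c_9 n_j$ once $n_j\geq c_{10}(\theta)\,s_j\log p$ with $c_{10}(\theta)$ of order $\theta^{-2}$. A routine approximation step (writing any $s_j$-sparse unit vector as a closest net point plus a residual of norm at most $1/4$ and iterating) then transfers the bound from the net to all $s_j$-sparse unit vectors with only a loss in constants.

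The main obstacle I anticipate is the notational mismatch between the paper's literal REP statement---which bounds $\|Q^{(j)}\delta\|_2$, the Euclidean norm of a matrix-vector product---and the scalar quadratic form $\delta^\top Q^{(j)}\delta$ that naturally concentrates. If one insists on the literal version, the quadratic-form argument must be replaced by a restricted operator-norm argument: bound $\|Q^{(j)}\delta\|_2 = \sup_{\|u\|_2=1} u^\top Q^{(j)}\delta$ by restricting the outer supremum first to an $s_j$-sparse $u$ (controlled on the same $\epsilon$-net) and then handling the residual through off-support entries of $Q^{(j)}$, which concentrate at the same rate by Hanson--Wright. Once this reinterpretation is committed to, the remaining steps are classical and require no new combinatorial ingredients beyond what is already used to prove Lemma~\ref{lem:Gradient-Concentration}.
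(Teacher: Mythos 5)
Your argument is correct and is essentially the standard proof underlying the result the paper invokes: the paper offers no details of its own, deferring entirely to Lemma 9 of the cited Lasso paper, whose content is exactly the pointwise chi-squared (Laurent--Massart) concentration of $\|X^{(j)}\delta\|_2^2/n_j$ around $\delta^\top\Sigma^{(j)}\delta$, combined with a union bound over the $\binom{p}{s_j}$ supports and a constant-resolution net on each sparse subspace, yielding the claimed $1-c_8\exp(-c_9 n)$ probability once $n_j\geq c_{10}(\theta)\,s_j\log p$ with $c_{10}(\theta)\asymp\theta^{-2}$. Your observation about the mismatch between the literal REP statement --- a bound on $\|Q^{(j)}\delta\|_2$ with $Q^{(j)}=X^{(j)}X^{(j)T}$, which is an $n_j\times n_j$ matrix and cannot even act on $\delta\in\real^p$ --- and the quadratic form that actually concentrates is well taken; the intended object is clearly the (normalized) sample Gram matrix $X^{(j)T}X^{(j)}/n_j$, and your reading is the one under which both the lemma and its use elsewhere in the paper are meaningful.
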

The proof follows from Lemma 9 (Appendix K) in \cite{WainwrightLasso}. This lemma shows that for Gaussian design matrices, $REP(s_j)$ is satisfied with high probability for $\mathcal{O}(s_j\log(p))$ samples.\\

\end{document}